\newtheorem{thm}{Theorem}[section] 
\newtheorem{prop}[thm]{Proposition} %
\newtheorem{prob}[thm]{Problem} %
\begin{document}

\begin{center}
  {\large \bf The U-curve optimization problem: improvements on the original algorithm and time complexity analysis}\\
  \bigskip
  Marcelo S. Reis$^{1}$, Carlos E. Ferreira$^{2}$, and Junior Barrera$^{2}$\\
  \bigskip
  $^1$ LETA-CeTICS, Instituto Butantan, S\~ao Paulo, Brazil.\\
  $^2$ Instituto de Matem\'atica e Estat\'istica, Universidade de S\~ao Paulo, S\~ao Paulo, Brazil.
  \bigskip
\end{center}

\bigskip

\begin{abstract}
The U-curve optimization problem is characterized by a
decomposable in U-shaped curves cost function over the chains of a
Boolean lattice. This problem can be applied to model the classical
feature selection problem in Machine Learning. Recently, the
$\proc{U-Curve}$ algorithm was proposed to give optimal solutions to
the U-curve problem. In this article, we point out that the
$\proc{U-Curve}$ algorithm is in fact suboptimal, and introduce the
$\proc{U-Curve-Search}$ (UCS) algorithm, which is actually
optimal. We also present the results of optimal and suboptimal 
experiments, in which $\proc{UCS}$ is compared with the $\proc{UBB}$
optimal branch-and-bound algorithm and the $\proc{SFFS}$ heuristic,
respectively. We show that, in both experiments, $\proc{UCS}$ had a
better performance than its competitor. Finally, we analyze the obtained
results and point out improvements on $\proc{UCS}$ that might enhance
the performance of this algorithm.
\end{abstract}

{\em Keywords:} combinatorial optimization, branch and bound, feature selection, U-curve problem, machine learning, Boolean lattice

\section{Introduction} \label{sec:introduction}

The feature selection problem is a problem in Machine Learning that
has been studied for many years~\citep{Theodoridis:Book}.  It is a
combinatorial optimization problem that consists in finding a subset
$X$ of a finite set of features $S$ that minimizes a cost function
$c$, defined from $\mathcal{P}(S)$,  the power set of $S$, to
$\mathbb{R}+$. If $X$ is a subset of $S$ and $c(X)$ is a minimum
(i.e., there does not exist another subset  $Y$ of $S$ such that $c(Y)
< c(X)$), then $X$ is an optimal set of features or a set of features
of minimum cost. The model of the feature selection problem is defined
by a cost function that has impact in the complexity of the
corresponding search algorithms. In realistic models, the cost
function depends on the estimation of the joint probability
distribution. The feature selection problem is NP-hard~\citep{Meiri:2006}, which
implies that the search for a subset $X$ of $S$ of minimum cost maybe unfeasible (i.e., it may require a computing time that is exponential in the cardinality of $S$). However, for some families of cost functions, there is a guarantee that the problem is not NP-hard  and, consequently, the search problem is feasible; for instance, when the cost function is submodular~\citep{Schrijver:2000}. Moreover, depending on the definition of the cost function, different search algorithms may be employed to solve the problem. Therefore, a feature selection problem relies on two important steps:
\begin{itemize}
\item the choice of an appropriate cost function, also known as feature selection subset criterion, which may lead to either a feasible or an unfeasible search problem;
\item the choice of an appropriate feature subset selection search algorithm for the defined search problem. The chosen search algorithm may be either optimal (i.e., it always finds a subset of minimum cost) or suboptimal (i.e., it may not find a subset of minimum cost).
\end{itemize}


The central problem in Machine Learning is the design of
classifiers from a given joint distribution of a vector of features
and the corresponding labels. Usually, such joint distribution is
estimated from a limited number of samples and the estimation error
depends on the choice of the feature selection vector. A feature
subset selection criterion is a measure of the estimated joint
distribution that intends to identify the availability of small error
classifiers. Some noteworthy feature subset selection criteria are the
Kullback-Leibler distance, the Chernoff
bound~\citep{Narendra:1977}, the divergence distance, the Bhattacharyya distance~\citep{Piramuthu:2004}, and the mean
conditional entropy~\citep{Lin:1991}. All those criteria are measures
of the estimation of joint distribution of a feature subset~\citep{Barrera:2007, Ris:2010}, therefore they are susceptible to the
U-curve phenomenon; that is, for a fixed number of samples, the increase in the number of considered features may have two consequences: if the available sample is enough to a good estimation, then it should occur a reduction of the estimation error, otherwise, the lack of data induces an increase of the estimation error. In practice, the available data are not enough for good estimations. A criterion that is susceptible to the U-curve phenomenon has U-shaped graphs when the function domain is restricted to a chain of the Boolean lattice $(\mathcal{P}(S), \subseteq)$. Such criterion may be used as the cost function of a {\bf U-curve problem}, which models a feature selection problem.


A feature subset selection algorithm is a search algorithm on the
space $\mathcal{P}(S)$ under the cost function $c$. Some mechanisms of
a general search algorithm are: an order to visit the elements of
$\mathcal{P}(S)$; visit and measure an element $X$ of
$\mathcal{P}(S)$; compare $c(X)$ with the current minimum and update
the minimum if necessary; prune every known element $Y$ of
$\mathcal{P}(S)$ such that $c(Y) > c(X)$; stop when the current
minimum satisfies a given condition. Once it is unknown a polynomial-time
optimal algorithm for the feature subset selection problem, some heuristics that produce suboptimal
solutions are applied~\citep{Pudil:1994, Meiri:2006, Unler:2010}. One of the first introduced heuristics was the
sequential selection: \citet{Marill:1963} presented a feature
selection algorithm for instances whose cost functions were determined
by a divergence distance. This greedy algorithm,
named Sequential Backward Selection ($\proc{SBS}$), is feasible,
though it has the so called ``nesting effect''~\citep{Pudil:1994}. \citet{Whitney:1971} introduced the dual of the $\proc{SBS}$,
the Sequential Forward Selection ($\proc{SFS}$)  algorithm, which also has the nesting effect. Some
heuristics were proposed in order to deal with the nesting
effect~\citep{Kittler:1978}, among them
the Sequential Forward Floating Selection ($\proc{SFFS}$) introduced by~\citet{Pudil:1994}. In the following
years, several works were published suggesting improvements on the
$\proc{SFFS}$ algorithm~\citep{Somol:1999, Nakariyakul:2009}. Nevertheless, the probability that a sequential selection based
algorithm gives an optimal solution decreases with the increase of the
number of features of the problem~\citep{Kudo:2000}, which encourages
the investigation of algorithms that give an optimal solution, a much less explored family of feature selection
algorithms ~\citep{Jain:2000}. \citet{Narendra:1977} presented a
branch-and-bound algorithm for the feature selection problem with cost
functions that are increasing. As in the exhaustive search, the
algorithm always gives an optimal solution, although it may be
unfeasible for a big number of features~\citep{Pudil:1994, Ris:2010}. \citet{Ris:2010} proposed a search algorithm to solve
the U-curve problem, the $\proc{U-Curve}$ algorithm.


The remainder of this article is organized as follows: in Section~\ref{sec:U-curve!algorithm}, we formalize the U-curve problem, point out an error in the $\proc{U-Curve}$ algorithm introduced by Ris et al, and present a modification to fix it. This correction implies in the creation of a new optimal-search algorithm for the U-curve problem, the $\proc{U-Curve-Search}$ ($\proc{UCS}$) algorithm. In Section~\ref{sec:results}, we show some experimental results with $\proc{UCS}$, the $\proc{UBB}$ optimal branch-and-bound algorithm, and the $\proc{SFFS}$ heuristic, and make a comparative analysis under optimal and suboptimal conditions. In Section~\ref{sec:discussion}, we present some analyses on the theoretical and experimental results. Finally, in Section~\ref{sec:conclusion}, we point out the contributions of this paper and suggest some future works for this research.

%

\section{The $\proc{U-Curve}$ and the $\proc{U-Curve-Search}$
  algorithms} \label{sec:U-curve!algorithm}

Let $S$ be a finite non-empty set. The {\bf power set}
$\mathcal{P}(S)$ is the collection of all subsets of $S$, including
the empty set and $S$ itself. A {\bf chain}  is a collection $\{X_1,
X_2, \ldots, X_k\} \subseteq \mathcal{P}(S)$, such that $X_1 \subseteq
X_2 \subseteq \ldots \subseteq X_k$.  A chain of $\mathcal{P}(S)$ is
{\bf maximal} if it is not properly included by any other chain of
$\mathcal{P}(S)$. Let $\mathcal{X} \subseteq \mathcal{P}(S)$ be a
chain and $f$ be a function that takes values from $\mathcal{X}$ to
$\mathbb{R}+$. $f$ describes a U-shaped curve if, for any $X_1, X_2,
X_3 \in \mathcal{X}$, then $X_1 \subseteq X_2 \subseteq X_3$ implies
that $f(X_2) \le max\{f(X_1), f(X_3)\}$.  A {\bf cost function} $c$ is
a function  defined from $\mathcal{P}(S)$ to $\mathbb{R}+$. $c$ is
{\bf decomposable in U-shaped curves}  if,  for each chain
$\mathcal{X} \subseteq  \mathcal{P}(S)$, the restriction of $c$ to
$\mathcal{X}$  describes a U-shaped curve. Let $X$ be an element of
$\mathcal{X}  \subseteq \mathcal{P}(S)$. If there does not exist
another element  $Y$ of $\mathcal{X}$ such that $c(Y) < c(X)$, then
$X$ is of  {\bf minimum cost} in $\mathcal{X}$. If $\mathcal{X} =
\mathcal{P}(S)$,  then we just say that $X$ is of minimum cost.

Now let us define the central problem that is studied in this article.

\begin{prob}{\em (U-curve problem)}\label{problem_ucurve}
Given a non-empty set $S$ and a decomposable in U-shaped curves cost function $c$, find an element of $\mathcal{P}(S)$ of minimum cost.
\end{prob}

\subsection{The $\proc{U-Curve}$ algorithm}

Let $S$ be a finite non-empty set. An {\bf interval} $[A, B]$ of $\mathcal{P}(S)$ is
defined as $[A,B] := \{ X \in \mathcal{P}(S) : A \subseteq X \subseteq
B\}$. Let  $[\emptyset,  L]$ be an interval with the leftmost term
being the empty set. $L$  is a {\bf lower restriction}. In the same
way, let $[U, S]$ be an interval with the rightmost term being the
complete set. $U$ is an {\bf upper restriction}. An element of
$\mathcal{P}(S)$ may be  represented by the characteristic vector in
$\{0, 1\}^{|S|}$.  In the figures of this article, every Boolean
lattice has its elements described through their characteristic vectors.

Aiming to solve the U-curve optimization problem (problem
\ref{problem_ucurve}), \citet{Ris:2010} presented the $\proc{U-Curve}$ algorithm. The $\proc{U-Curve}$ is an optimal algorithm that has the following properties:
\begin{enumerate}[label=(\roman{*}), ref=(\roman{*})]
\item \label{item:ucurve_one} it works on a search space (the collection of subsets of a finite, non-empty set) that is a Boolean lattice;
\item \label{item:ucurve_two} and optimizes cost functions that are decomposable in U-shaped curves.
\end{enumerate}

The way the $\proc{U-Curve}$ algorithm visits the search space takes into account the properties \ref{item:ucurve_one} and \ref{item:ucurve_two}: at each iteration, the algorithm explores a maximal chain, either in bottom-up or top-down way, until it reaches an element $M$ of minimum cost. Then, it executes a procedure, called ``minimum exhausting'', that performs a depth-first search for elements of cost less or equal to $c(M)$. At the end of the iteration, a list of elements of minimum cost visited so far is updated and the search space is reduced through the update of two collections of elements, called ``upper restrictions'' and ``lower restrictions''. Each element $U$ of the ``upper restrictions'' removes from the search space the elements in the interval $[U, S]$, while each element $L$ of the ``lower restrictions'' removes from the search space the elements in the interval $[\emptyset, L]$. The algorithm performs a succession of iterations until the search space is empty. For a detailed explanation of the algorithm dynamics, refer to \citet{Ris:2010}, particularly to Figure 4.

The $\proc{U-Curve}$ algorithm showed a better performance than the $\proc{SFFS}$ heuristic, that is, for a given instance of the U-curve problem, $\proc{U-Curve}$ finds subsets of cost less or equal to the best subset found by $\proc{SFFS}$, generally with fewer access to the cost function ~\citep{Ris:2010}. To avoid unnecessary access to cost function usually is important in feature selection problems, since often one access is expensive and the search space is very large.

\subsection{Problem with the $\proc{U-Curve}$ algorithm}
\label{sec:UCS:error}

Although the $\proc{U-Curve}$ algorithm was designed to be optimal, an analysis of its correctness resulted in the discovery of a
problem that leads to suboptimal results in some kinds of
instances. This problem is located in the $\proc{Minimum-Exhausting}$
subroutine, which is described in \citet{Ris:2010},
Algorithm 3. $\proc{Minimum-Exhausting}$ performs a depth-first search
(DFS) on the Boolean lattice. The subroutine begins by adding into a
stack an element that has minimum cost in a given chain of $\mathcal{P}(S)$. Let us call $T$ the top of this stack. At the beginning of each iteration, all elements adjacent to $T$ in the graph defined by the Hasse diagram of the lattice are evaluated, that is, all elements that have Hamming distance one to $T$. Then, from those elements adjacent to $T$, we push into the stack every element $Y$ that satisfies the conditions: (i) $Y$ was not removed from the search space by the collections of restrictions; (ii) $Y$ has a cost less or equal than $T$; (iii) $Y$ is not in the stack. If no adjacent element of $T$ was added into the stack, then $T$ is considered a ``minimum exhausted'', which implies that it is popped from the stack and added to the collections of lower and upper restrictions, ending this iteration. The subroutine iterates until the stack is empty.

We will show now that $\proc{Minimum-Exhausting}$ has an error that may lead to
suboptimal results.  In Figure \ref{fig:ucurve_error}, we show an
example of an instance $\langle S, c \rangle$ of the U-curve problem,
in which the element of the Boolean lattice with minimum cost may be
lost during the execution of the $\proc{Minimum-Exhausting}$
subroutine. This subroutine is called in Figure
\ref{fig:ucurve_error:D}, in which the DFS procedure
starts pushing $11100$ into the stack. The search branches until the
element $01110$ is pushed into the stack; once $01110$ has no
adjacent element out of the stack, the subroutine backtracks to the
element $01111$ (Figure \ref{fig:ucurve_error:F}). The element $01111$
also has no element out of the stack, therefore it is popped from the
stack and included in the collections of restrictions -- this
operation removes from the search space the unique global minimum,
$00001$, which was not visited yet, thus losing it (Figure
\ref{fig:ucurve_error:G}).

\subsection{The principles of the proposed correction}
\label{sec:UCS!principles}

We will now present the principles of the proposed correction by providing sufficient conditions to remove unvisited elements from the search space without losing global minima. In the following, we show four sufficient conditions for removing elements from the search space.

\begin{prop}\label{sufficient_minimum_exhausted}
Let $\langle S, c \rangle$ be an instance of the U-curve problem, $\mathcal{X} \subseteq \mathcal{P}(S)$ be a current search space, and $X$ be an element of $\mathcal{X}$. If there exists an element $Y \in \mathcal{X}$ such that $Y \subseteq X$ and $c(Y) > c(X)$, then all elements in $[\emptyset, Y]$ have cost greater than $c(X)$. 
\end{prop}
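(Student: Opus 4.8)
The plan is to exploit the fact that $c$ is decomposable in U-shaped curves, so that on any chain the cost function cannot have an interior value exceeding the maximum of the two endpoints. Fix an arbitrary element $Z \in [\emptyset, Y]$; I want to show $c(Z) > c(X)$. The key observation is that $Z \subseteq Y \subseteq X$, so $\{Z, Y, X\}$ is a chain in $\mathcal{P}(S)$, and the restriction of $c$ to this chain describes a U-shaped curve by hypothesis.

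First I would apply the U-shaped curve property to the chain $Z \subseteq Y \subseteq X$ with $Y$ as the middle element: this gives $c(Y) \le \max\{c(Z), c(X)\}$. Next I would combine this with the hypothesis $c(Y) > c(X)$. Since $c(Y) > c(X)$, the maximum $\max\{c(Z), c(X)\}$ cannot be attained at $c(X)$ (otherwise $c(Y) \le c(X)$, a contradiction), so we must have $\max\{c(Z), c(X)\} = c(Z)$ and hence $c(Y) \le c(Z)$. Chaining the two inequalities yields $c(Z) \ge c(Y) > c(X)$, which is exactly what we want. Since $Z$ was an arbitrary element of $[\emptyset, Y]$, this shows every element of $[\emptyset, Y]$ has cost greater than $c(X)$.

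I do not anticipate a genuine obstacle here; the statement is essentially an immediate unpacking of the definition of a U-shaped curve applied to the right three-element chain. The one subtlety worth stating carefully is the degenerate cases where $Z = Y$ or $Z = X$ (or $Y = X$, which is excluded by $c(Y) > c(X)$): when $Z = Y$ the conclusion $c(Z) = c(Y) > c(X)$ is immediate, and when $Z \subsetneq Y$ the argument above applies verbatim, so a single sentence handles this. It is also worth emphasizing in the write-up that we never needed $Y \subseteq X$ to be strict, nor did we use anything about $\mathcal{X}$ being a search space beyond containing the relevant elements — the proposition is really a statement about decomposable cost functions and is the natural ``lower restriction'' analogue of the minimum-exhausting intuition.
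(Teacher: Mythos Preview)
Your proposal is correct and follows essentially the same argument as the paper: fix an arbitrary $Z \in [\emptyset, Y]$, apply the U-shaped curve inequality $c(Y) \le \max\{c(Z), c(X)\}$ to the chain $Z \subseteq Y \subseteq X$, and combine with $c(Y) > c(X)$ to force $c(Z) \ge c(Y) > c(X)$. The paper's proof is exactly this, just more tersely written; your additional remarks on degenerate cases and on which hypotheses are actually used are sound but not needed for the argument to go through.
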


\begin{proof}
Let us consider $X, Y \in \mathcal{P}(S)$ and $Z \in [\emptyset, Y]$
such that $Y \subseteq X$ and $c(Y) > c(X)$. By the definition of a
decomposable in U-shaped curves cost function, it holds that $c(Y) \le max \{ c(Z), c(X) \}$. Thus,  $c(Y) \le c(Z)$ or $c(Y) \le c(X)$, and $c(Y) > c(X)$. Therefore, we have $c(Z) \ge c(Y) > c(X)$, then that all elements in $[\emptyset, Y]$ have cost greater than $c(X)$.  
\end{proof}

The result of Proposition \ref{sufficient_minimum_exhausted} also holds for the Boolean lattice $(\mathcal{P}(S), \supseteq)$.

\begin{prop}\label{sufficient_minimum_exhausted_dual}
Let $\langle S, c \rangle$ be an instance of the U-curve problem, $\mathcal{X} \subseteq \mathcal{P}(S)$ be a current search space, and $X$ be an element of $\mathcal{X}$. If there exists an element $Y \in \mathcal{X}$ such that $X \supseteq Y$ and $c(X) > c(Y)$, then all elements in $[Y, S]$ have cost greater than $c(X)$. 
\end{prop}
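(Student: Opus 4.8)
The plan is to obtain Proposition~\ref{sufficient_minimum_exhausted_dual} as the order-dual of Proposition~\ref{sufficient_minimum_exhausted}, since $(\mathcal{P}(S), \supseteq)$ is again a Boolean lattice and the same cost function $c$ stays decomposable in U-shaped curves on it: reversing a chain keeps it a chain, its middle element is unchanged, and the defining inequality $f(X_2) \le \max\{f(X_1), f(X_3)\}$ is symmetric in the two extreme arguments. Under this reversal the least element becomes $S$, so the down-set $[\emptyset, Y]$ of Proposition~\ref{sufficient_minimum_exhausted} maps to an up-set of the form $[\,\cdot\,, S]$, which is what produces the interval in the conclusion.

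Working directly, I would start from the stated hypotheses $X \supseteq Y$ (equivalently $Y \subseteq X$) and $c(X) > c(Y)$, and pick an arbitrary element $Z$ sitting above the higher-cost endpoint, i.e.\ $Z \in [X, S]$, so that $X \subseteq Z$. Together with $Y \subseteq X$ this yields the chain $Y \subseteq X \subseteq Z$ in which the higher-cost element $X$ occupies the middle position. Feeding this chain into the U-shaped-curve definition gives $c(X) \le \max\{c(Y), c(Z)\}$, which is the single structural inequality the whole argument rests on.

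The key step then mirrors the case analysis in the proof of Proposition~\ref{sufficient_minimum_exhausted}: because $c(X) > c(Y)$, the maximum on the right cannot be attained at $c(Y)$, so it is attained at $c(Z)$, forcing $c(Z) \ge c(X)$. Chaining this with $c(X) > c(Y)$ gives $c(Z) \ge c(X) > c(Y)$; since $Z$ was an arbitrary element above $X$, every element of the up-set beyond the higher-cost endpoint has cost exceeding that of the lower endpoint, which is precisely the up-set mirror of the pruning furnished by Proposition~\ref{sufficient_minimum_exhausted}.

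The main obstacle I anticipate is keeping the roles of $X$ and $Y$ straight when crossing to the dual order: the decisive requirement is that the element of larger cost must be placed in the middle of the three-term chain, because that is exactly what a single application of the U-shaped inequality controls. A secondary subtlety is strictness: the inequality $c(X) \le \max\{c(Y), c(Z)\}$ only delivers $c(Z) \ge c(X)$, with equality possible at $Z = X$, so the strict bound has to be read against the smaller cost $c(Y)$, as recorded by $c(Z) \ge c(X) > c(Y)$.
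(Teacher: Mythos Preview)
Your appeal to duality is exactly the paper's proof: the authors write only ``Applying the principle of duality, the result of Proposition~\ref{sufficient_minimum_exhausted} also holds for the Boolean lattice $(\mathcal{P}(S), \supseteq)$.'' Your remark that the U-shape inequality $f(X_2)\le\max\{f(X_1),f(X_3)\}$ is symmetric in the outer arguments, so that reversing chains preserves decomposability, is a useful elaboration the paper omits.

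There is, however, a mismatch between your direct argument and the conclusion as printed. You choose $Z\in[X,S]$, form the chain $Y\subseteq X\subseteq Z$, and correctly derive $c(Z)\ge c(X)>c(Y)$. But the proposition as stated asks for every $Z\in[Y,S]$ to satisfy $c(Z)>c(X)$, and that claim is simply false under the printed hypotheses: $Y$ itself belongs to $[Y,S]$ while $c(Y)<c(X)$. The genuine order-dual of Proposition~\ref{sufficient_minimum_exhausted} reads either ``$X\subseteq Y$ and $c(Y)>c(X)$ imply $c(Z)>c(X)$ for all $Z\in[Y,S]$'' or, keeping the labels as printed, ``$Y\subseteq X$ and $c(X)>c(Y)$ imply $c(Z)>c(Y)$ for all $Z\in[X,S]$''. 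Your chain argument establishes the second formulation; the paper's later application of the proposition (where $Y$ is \emph{upper} adjacent to $X$ with $c(Y)>c(X)$, and $[Y,S]$ is pruned) uses the first. So your reasoning is sound and your instinct to place the larger-cost element in the middle of the chain is exactly right, but you have silently repaired a typo in the statement rather than proved it as written; it would be better to flag the discrepancy explicitly.
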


\begin{proof}
Applying the principle of duality, the result of Proposition \ref{sufficient_minimum_exhausted} also holds for the Boolean lattice $(\mathcal{P}(S), \supseteq)$.
\end{proof}

\newpage

\begin{landscape}
\begin{figure}[!ht]
  \centering
  \begin{tabular}{ c c c}
    \subfigure[] {\scalebox{0.4}{\includegraphics[trim=3cm 8.5cm 1.5cm 2.5cm, clip=true]{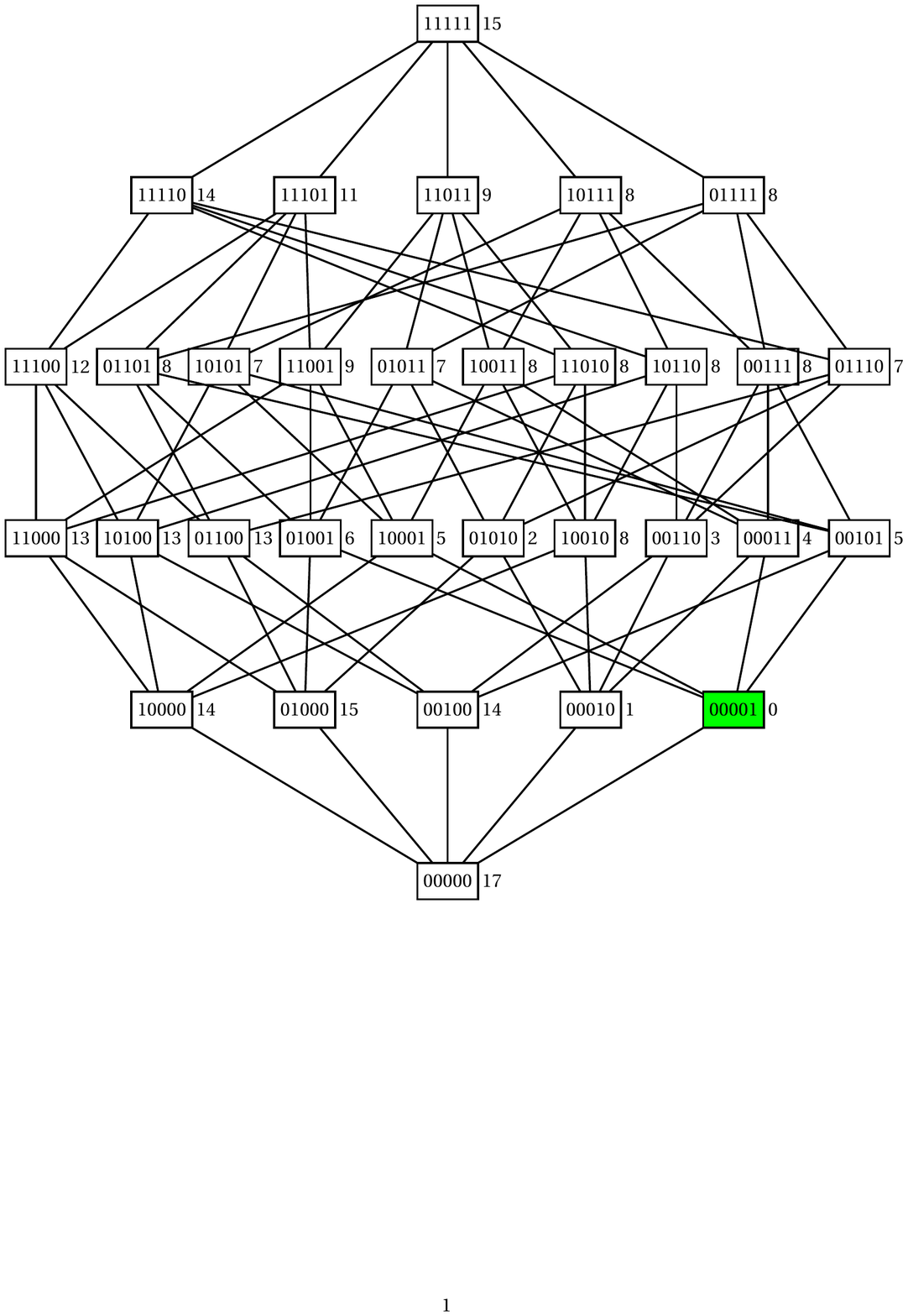}} \label{fig:ucurve_error:A} }
    &
    \subfigure[] {\scalebox{0.4}{\includegraphics[trim=3cm 8.5cm 1.5cm 2.5cm, clip=true]{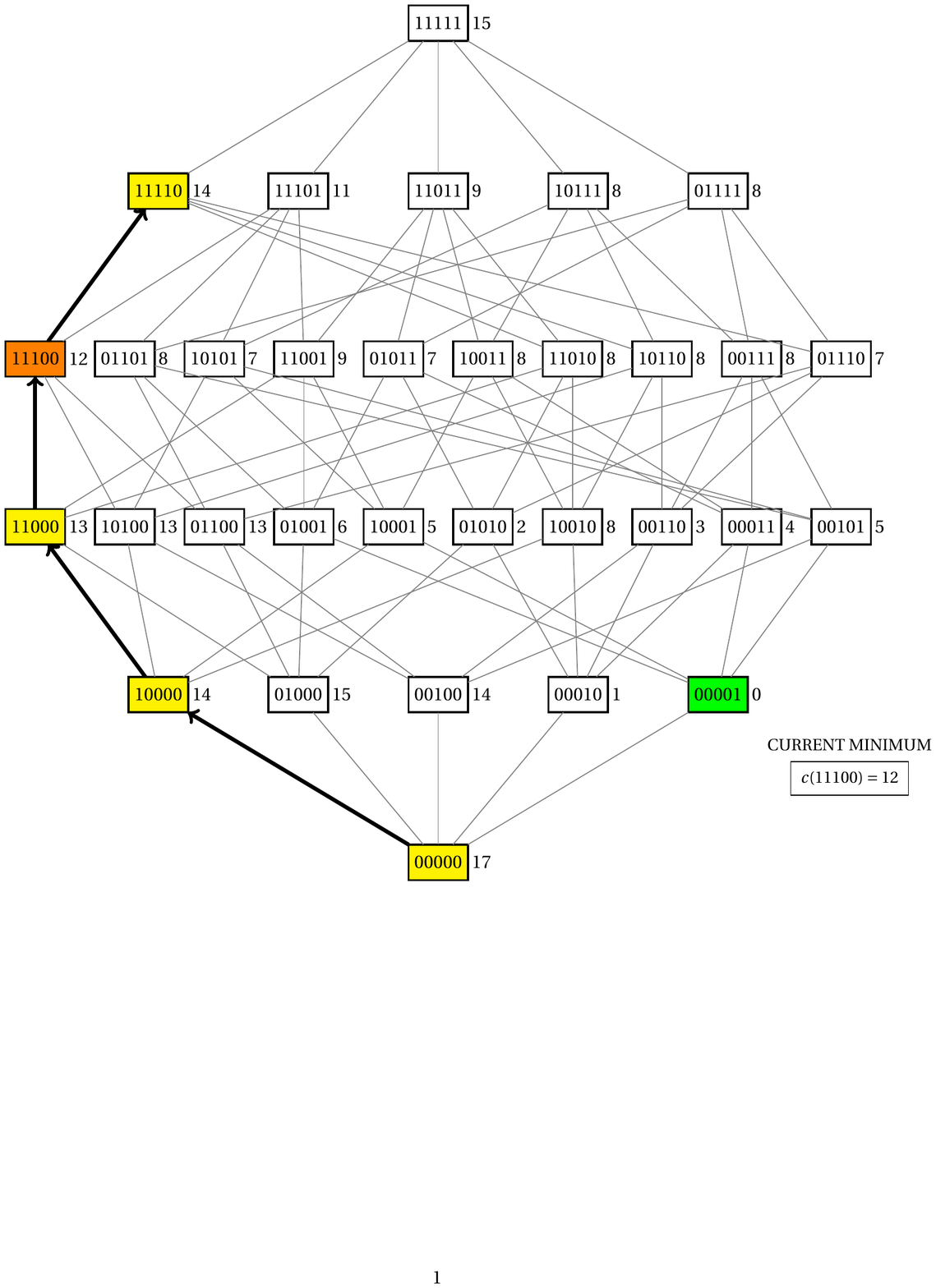}} \label{fig:ucurve_error:B} }
    &
    \subfigure[] {\scalebox{0.4}{\includegraphics[trim=3cm 8.5cm 1.5cm 2.5cm, clip=true]{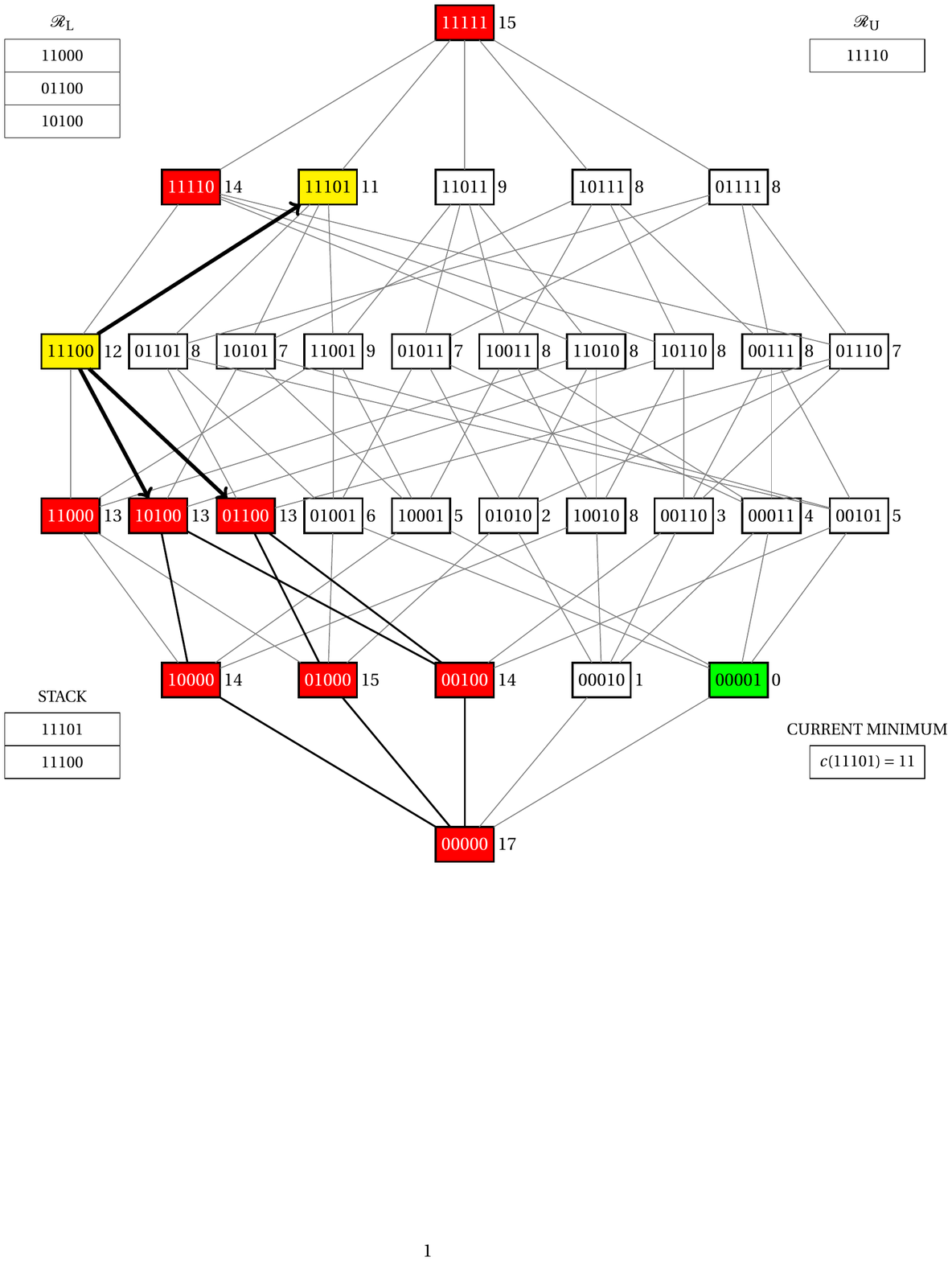}} \label{fig:ucurve_error:D} }
    \\
    \subfigure[] {\scalebox{0.4}{\includegraphics[trim=3cm 8.5cm 1.5cm 2.5cm, clip=true]{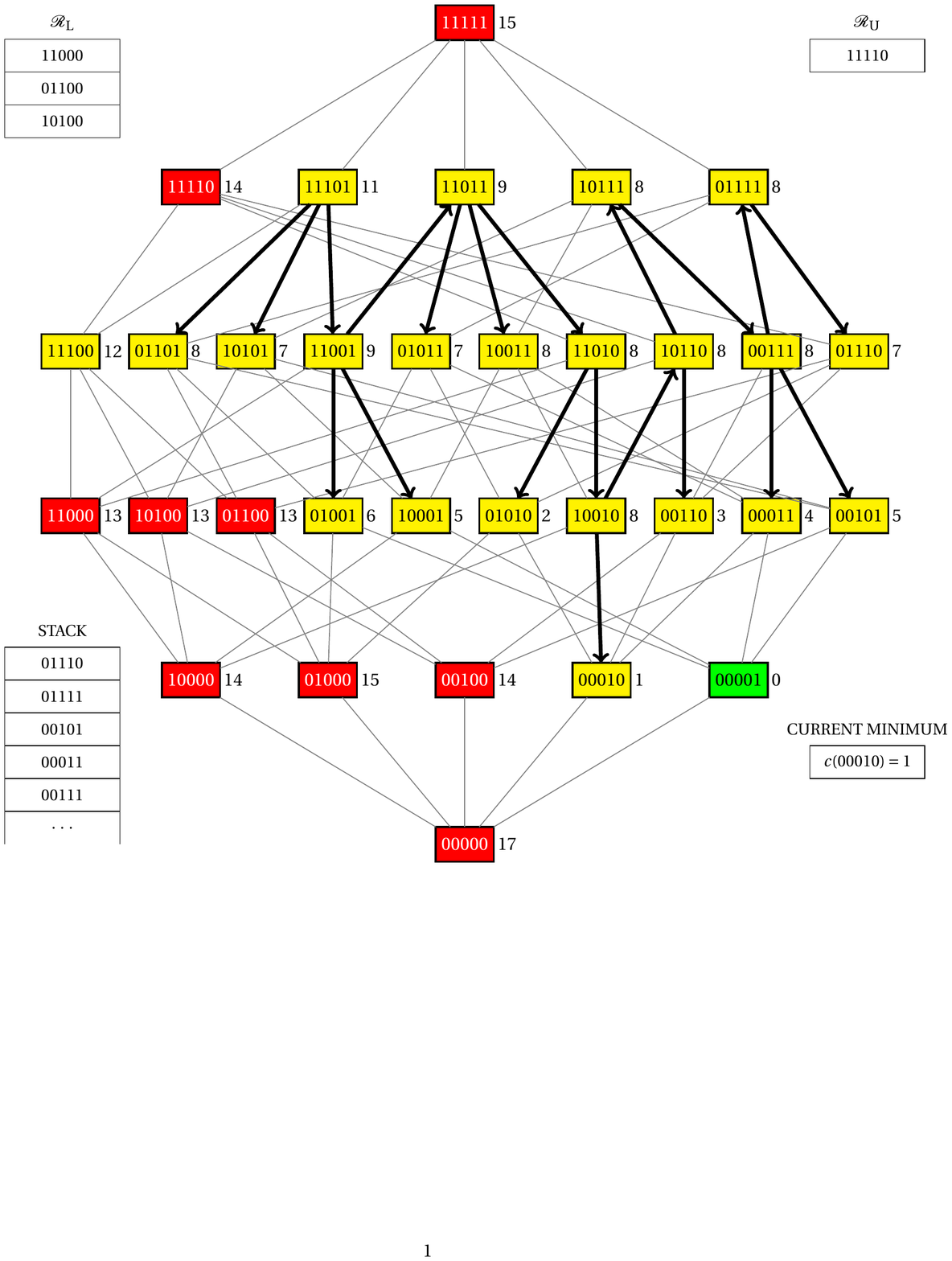}} \label{fig:ucurve_error:E} }
    &
    \subfigure[] {\scalebox{0.4}{\includegraphics[trim=3cm 8.5cm 1.5cm 2.5cm, clip=true]{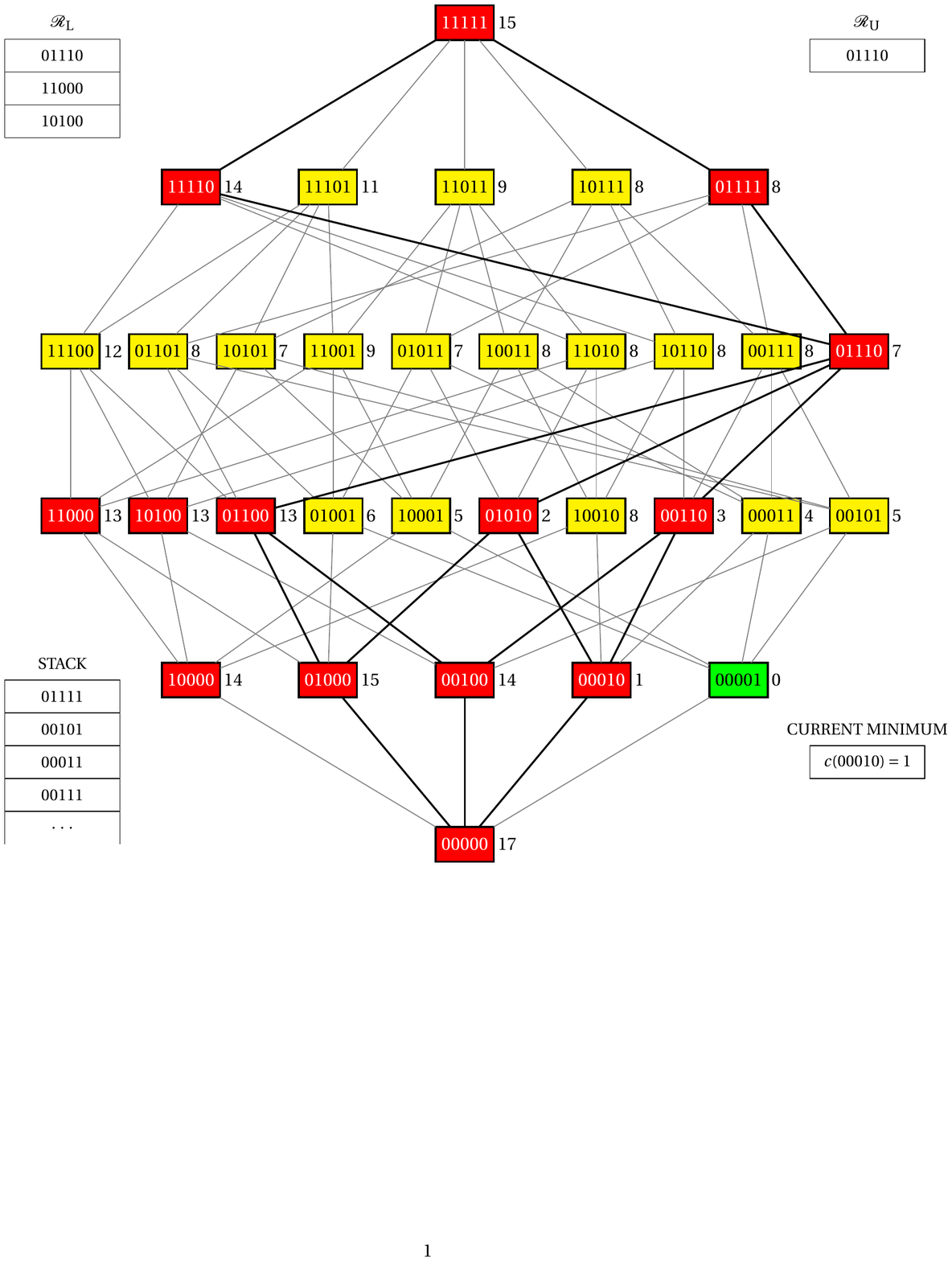}} \label{fig:ucurve_error:F} }
    &
    \subfigure[] {\scalebox{0.4}{\includegraphics[trim=2cm 8.5cm 1.5cm 2.5cm, clip=true]{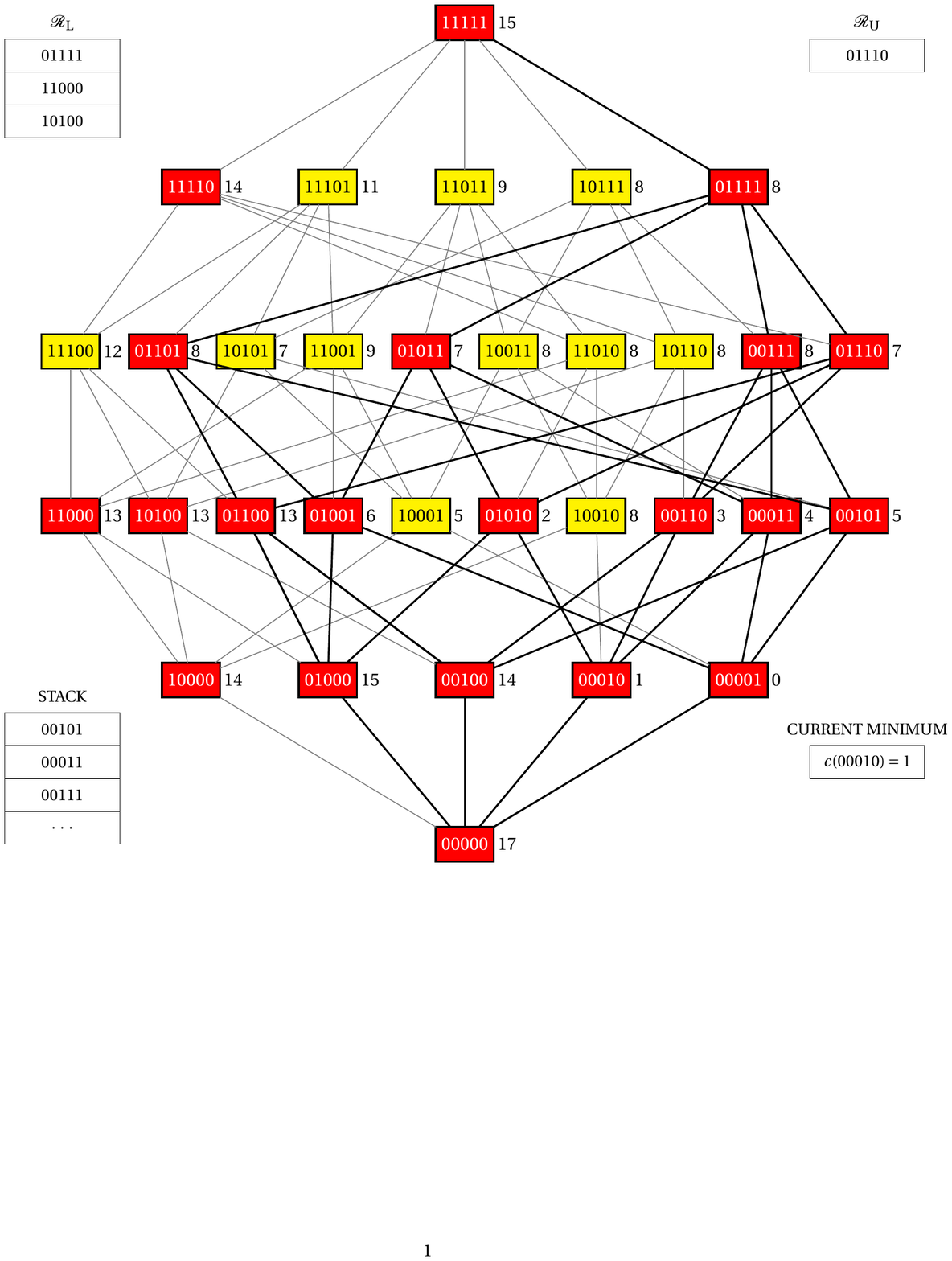}} \label{fig:ucurve_error:G} }
  \end{tabular}
  \caption{a counter-example of the correctness of
    $\proc{Minimum-Exhausted}$ subroutine. In this simulation of the subroutine, visited elements are 
    in yellow, while elements that were removed from the search space
    are in red. In Fig. \ref{fig:ucurve_error:G}, a pruning
  removes from the search space the global minimum, thus losing it.}
  \label{fig:ucurve_error} 
\end{figure}
\end{landscape}

Let $A$ and $B$ be elements of $\mathcal{P}(S)$. $A$ is {\bf lower
  adjacent} to $B$ (and $B$ is {\bf upper adjacent} to $A$) if $A
\subseteq B$ and there is no element $X \in \mathcal{P}(S)$ such that
$A \subset X \subset B$. There is a basic result in Boolean algebra
that also guarantees  the removal of one element from the search space
without the risk of losing global minima.

\begin{prop}\label{Boolean_algebra_minimum_exhausted}
Let $S$ be a non-empty set, $\mathcal{X} \subseteq \mathcal{P}(S)$ be a current search space, and $X$ be an element of $\mathcal{X}$. If for each element $Y \in \mathcal{P}(S)$ such that $Y$ is lower adjacent to $X$, $[\emptyset, Y] \cap \mathcal{X} = \emptyset$, then $[ \emptyset, X] - \{ X \}$ does not contain an element of $\mathcal{X}$.
\end{prop}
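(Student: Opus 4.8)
The plan is to reduce the statement to a single structural observation about the Boolean lattice: that every \emph{proper} subset of $X$ already sits inside one of the intervals $[\emptyset,Y]$ ranging over the lower adjacent elements $Y$ of $X$. Once that is in hand, the hypothesis kills all such elements at once, and nothing about U-shaped curves is needed — this is, as the surrounding text indicates, a purely order-theoretic fact, in contrast with Propositions \ref{sufficient_minimum_exhausted} and \ref{sufficient_minimum_exhausted_dual}.

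Concretely, I would fix an arbitrary $Z \in [\emptyset,X] - \{X\}$, so that $Z \subseteq X$ and $Z \neq X$. Since $Z$ is a proper subset of $X$, there is some $s \in X \setminus Z$; set $Y := X \setminus \{s\}$. Then $Y \subseteq X$ with $|X \setminus Y| = 1$, so no element of $\mathcal{P}(S)$ lies strictly between $Y$ and $X$, i.e.\ $Y$ is lower adjacent to $X$. Because $s \notin Z$, we also have $Z \subseteq Y$, hence $Z \in [\emptyset,Y]$. Applying the hypothesis to this particular lower adjacent $Y$ gives $[\emptyset,Y] \cap \mathcal{X} = \emptyset$, and therefore $Z \notin \mathcal{X}$. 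As $Z$ was an arbitrary element of $[\emptyset,X] - \{X\}$, this set contains no element of $\mathcal{X}$, which is exactly the assertion.

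I do not expect a real obstacle here; the only point requiring a moment's care is the degenerate case $X = \emptyset$, for which $[\emptyset,X] - \{X\} = \emptyset$ and the conclusion holds vacuously (consistently, $\emptyset$ has no lower adjacent element, so the hypothesis is vacuous as well). The single substantive step is the choice of the witnessing element $s \in X \setminus Z$ and the remark that deleting it from $X$ yields a lower cover of $X$ that still lies above $Z$.
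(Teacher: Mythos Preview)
Your proof is correct and follows essentially the same idea as the paper's: the paper states the decomposition $\bigcup\{[\emptyset,Y]:Y\text{ lower adjacent to }X\}=[\emptyset,X]-\{X\}$ as a fact (phrased via ``$X$ is the least upper bound of its lower adjacent elements'') and concludes immediately, whereas you verify the needed inclusion element-by-element by exhibiting the witness $Y=X\setminus\{s\}$ for each proper subset $Z$. The argument is the same in substance; your version is simply more explicit.
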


\begin{proof}
Let us consider an element $X \in \mathcal{X}$ and that, for each element $Y \in \mathcal{P}(S)$ such that $Y$ is lower adjacent to $X$, $[\emptyset, Y] \cap \mathcal{X} = \emptyset$. Due to the fact that $X$ is the least upper bound of its lower adjacent elements, $\cup \{ [\emptyset, Y] : Y \mbox{ is lower adjacent to }  X \} = [\emptyset, X] - X$. Therefore, $[ \emptyset, X] - \{ X \}$ does not contain an element of $\mathcal{X}$.
\end{proof}

The result of Proposition \ref{Boolean_algebra_minimum_exhausted} also holds for the Boolean lattice $(\mathcal{P}(S), \supseteq)$.

\begin{prop}\label{Boolean_algebra_minimum_exhausted_dual}
Let $S$ be a non-empty set, $\mathcal{X} \subseteq \mathcal{P}(S)$ be a current search space, and $X$ be an element of $\mathcal{X}$. If for each element $Y \in \mathcal{P}(S)$ such that $Y$ is upper adjacent to $X$, $[Y, S] \cap \mathcal{X} = \emptyset$, then $[X, S] - \{ X \}$ does not contain an element of $\mathcal{X}$.
\end{prop}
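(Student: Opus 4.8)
The plan is to deduce Proposition~\ref{Boolean_algebra_minimum_exhausted_dual} from Proposition~\ref{Boolean_algebra_minimum_exhausted} by the principle of duality, in exactly the same spirit as Proposition~\ref{sufficient_minimum_exhausted_dual} was obtained from Proposition~\ref{sufficient_minimum_exhausted}. Concretely, I would introduce the complementation map $\varphi : \mathcal{P}(S) \to \mathcal{P}(S)$, $\varphi(Z) = S \setminus Z$, which is an order-reversing bijection of $\mathcal{P}(S)$ onto itself: it carries $(\mathcal{P}(S), \subseteq)$ to $(\mathcal{P}(S), \supseteq)$, sends each interval $[A,B]$ to the interval $[\varphi(B), \varphi(A)]$, interchanges the top element $S$ with the bottom element $\emptyset$, and turns the relation ``$Y$ is upper adjacent to $X$'' into ``$\varphi(Y)$ is lower adjacent to $\varphi(X)$''. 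In particular $\varphi([Y,S]) = [\emptyset, \varphi(Y)]$ and $\varphi([X,S] - \{X\}) = [\emptyset, \varphi(X)] - \{\varphi(X)\}$.

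Next I would put $\mathcal{X}' := \varphi(\mathcal{X}) = \{ S \setminus Z : Z \in \mathcal{X} \}$ and $X' := \varphi(X)$, and check that the hypothesis of Proposition~\ref{Boolean_algebra_minimum_exhausted_dual} transports to the hypothesis of Proposition~\ref{Boolean_algebra_minimum_exhausted} for the pair $(\mathcal{X}', X')$: indeed, every element lower adjacent to $X'$ is of the form $\varphi(Y)$ for some $Y$ upper adjacent to $X$, and then $[\emptyset, \varphi(Y)] \cap \mathcal{X}' = \varphi\bigl([Y,S] \cap \mathcal{X}\bigr) = \varphi(\emptyset) = \emptyset$ because $\varphi$ is a bijection and $[Y,S] \cap \mathcal{X} = \emptyset$ by assumption. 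Proposition~\ref{Boolean_algebra_minimum_exhausted}, applied with search space $\mathcal{X}'$ and element $X'$, then gives $\bigl([\emptyset, X'] - \{X'\}\bigr) \cap \mathcal{X}' = \emptyset$; applying $\varphi$ to both sides and using $\varphi(X') = X$, $\varphi(\mathcal{X}') = \mathcal{X}$ and $\varphi\bigl([\emptyset, X'] - \{X'\}\bigr) = [X, S] - \{X\}$ yields $\bigl([X,S] - \{X\}\bigr) \cap \mathcal{X} = \emptyset$, which is the assertion.

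If one prefers to avoid the duality wrapper, the same thing can be done directly, mirroring the proof of Proposition~\ref{Boolean_algebra_minimum_exhausted}: since $(\mathcal{P}(S), \subseteq)$ is a finite lattice, $X$ is the greatest lower bound of the collection of elements upper adjacent to it, and every $Z \in [X,S]$ with $Z \neq X$ contains some upper adjacent $Y$ of $X$ (take $s \in Z \setminus X$ and $Y = X \cup \{s\}$), whence $\bigcup \{ [Y,S] : Y \text{ upper adjacent to } X \} = [X,S] - \{X\}$; each $[Y,S]$ is disjoint from $\mathcal{X}$ by hypothesis, so the union is too. I do not expect any real obstacle in either route: the only delicate point is the bookkeeping of the anti-isomorphism $\varphi$ — verifying that it swaps ``upper adjacent'' with ``lower adjacent'' and the intervals $[Y,S]$ with $[\emptyset,\varphi(Y)]$ — so that the hypotheses and conclusion of Proposition~\ref{Boolean_algebra_minimum_exhausted} line up correctly after dualizing.
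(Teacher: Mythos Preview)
Your proposal is correct and follows exactly the paper's approach: the paper's proof consists of a single sentence invoking the principle of duality to transfer Proposition~\ref{Boolean_algebra_minimum_exhausted} to the lattice $(\mathcal{P}(S),\supseteq)$. You have simply made that duality explicit via the complementation map (and even offered a direct alternative), which is more detailed than, but substantively identical to, the paper's argument.
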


\begin{proof}
Applying the principle of duality, the result of Proposition \ref{Boolean_algebra_minimum_exhausted} also holds for the Boolean lattice $(\mathcal{P}(S), \supseteq)$.
\end{proof}

To fix the error in the $\proc{Minimum-Exhausting}$ subroutine, we
developed a new optimal search algorithm for the U-curve problem.
This new algorithm stores an element that is visited during a
search in a structure called node. A node contains an element of $\mathcal{P}(S)$ and two Boolean
flags: ``lower restriction flag'' and ``upper restriction flag'',
which assign whether the element of the node was removed from the
search space, respectively, by the collection of lower restrictions or
by the collection of upper restrictions. The algorithm also takes into
account Propositions \ref{sufficient_minimum_exhausted},
\ref{sufficient_minimum_exhausted_dual},
\ref{Boolean_algebra_minimum_exhausted} and
\ref{Boolean_algebra_minimum_exhausted_dual} to remove elements from
the search space. There are two types of removal of elements from the search space:
\begin{itemize}
\item let $X$ and $Y$ be elements  in  $\mathcal{X} \subseteq \mathcal{P}(S)$, such that $Y$ is lower  (in the dual case, upper) adjacent to $X$. If $c(Y) > c(X)$, then by Proposition \ref{sufficient_minimum_exhausted} (\ref{sufficient_minimum_exhausted_dual}) the interval $[\emptyset, Y]$ ($[Y, S]$) should be removed from the search space. Therefore, $Y$ is included in the collection of lower (upper) restrictions and its node is marked with the lower (upper) restriction flag. 
\item let $X$ be an element  in  $\mathcal{X} \subseteq \mathcal{P}(S)$. If there is no element lower (in the dual case, upper) adjacent to $X$ in $\mathcal{X}$, then by Proposition \ref{Boolean_algebra_minimum_exhausted} (\ref{Boolean_algebra_minimum_exhausted_dual}) the interval $[\emptyset, X]$ ($[X, S]$) should be removed from the search space. Therefore, $X$ is handled as the element $Y$ of the previous item.
\end{itemize}

In Figure~\ref{fig:ucurve_corrected},  we show the first steps of a simulation of this new algorithm; the
complete simulation is provided in the Supplementary Material.

%
\subsection{The $\proc{U-Curve-Search}$ algorithm}
\label{sec:UCS:algorithm}

In this section, we will present the $\proc{U-Curve-Search}$
($\proc{UCS}$) algorithm, which implements the proposed correction
presented in the previous section. We will also give the time
complexity of the main algorithm. The description of this algorithm
will be done following a logical decomposition of the algorithm in a
bottom-up way, that is, the subroutines will be presented and studied
in the inverse order of the subroutine order call tree. The
presentation of each subroutine will be given through its description and
pseudo-code; the descriptions of dual subroutines will be omitted, as
well as the pseudo-codes of very simple subroutines.

From now on, the cardinality of a non-empty set $S$ will be denoted as $n := |S|$. It is assumed that any element $X \in \mathcal{P}(S)$ demands $n$ bits to represent it, and that any collection $\mathcal{R} \subseteq \mathcal{P}(S)$ is implemented using a doubly linked list. Therefore, the time complexity of a search is $O(n |\mathcal{R}| )$, of an insertion is $O(1)$, and of a deletion is $O(1)$. Finally, we assume that any call of the cost function $c$ demands $O(f(n))$, in which $f(n)$ depends on the chosen cost function.

\subsubsection{Update of upper and lower restrictions}

Let $X$ be an element in  $\mathcal{P}(S)$. $X$ is {\bf covered} by a collection of lower (in the dual form, upper) restrictions $\mathcal{R}_L$ ($\mathcal{R}_U$)  if there exists an element $R \in \mathcal{R}_L$ ($R \in \mathcal{R}_U$) such that $X \subseteq R$ ($R \subseteq X$). 

\paragraph{Subroutine description} 
$\proc{Update-Lower-Restriction}$ receives an element $X \in \mathcal{P}(S)$ and a collection of lower restrictions $\mathcal{R}_L$. If $X$ is not covered by $\mathcal{R}_L$, then the algorithm adds $X$ into $\mathcal{R}_L$ and removes all the elements of $\mathcal{R}_L$ that are properly contained in $X$. Finally, the subroutine returns the updated collection $\mathcal{R}_L$.

\newpage

\begin{landscape}
\begin{figure}[!ht]
  \centering
  \begin{tabular}{c c c}
    \subfigure[] {\scalebox{0.4}{\includegraphics[trim=3cm 8.5cm 1.5cm 2.5cm, clip=true]{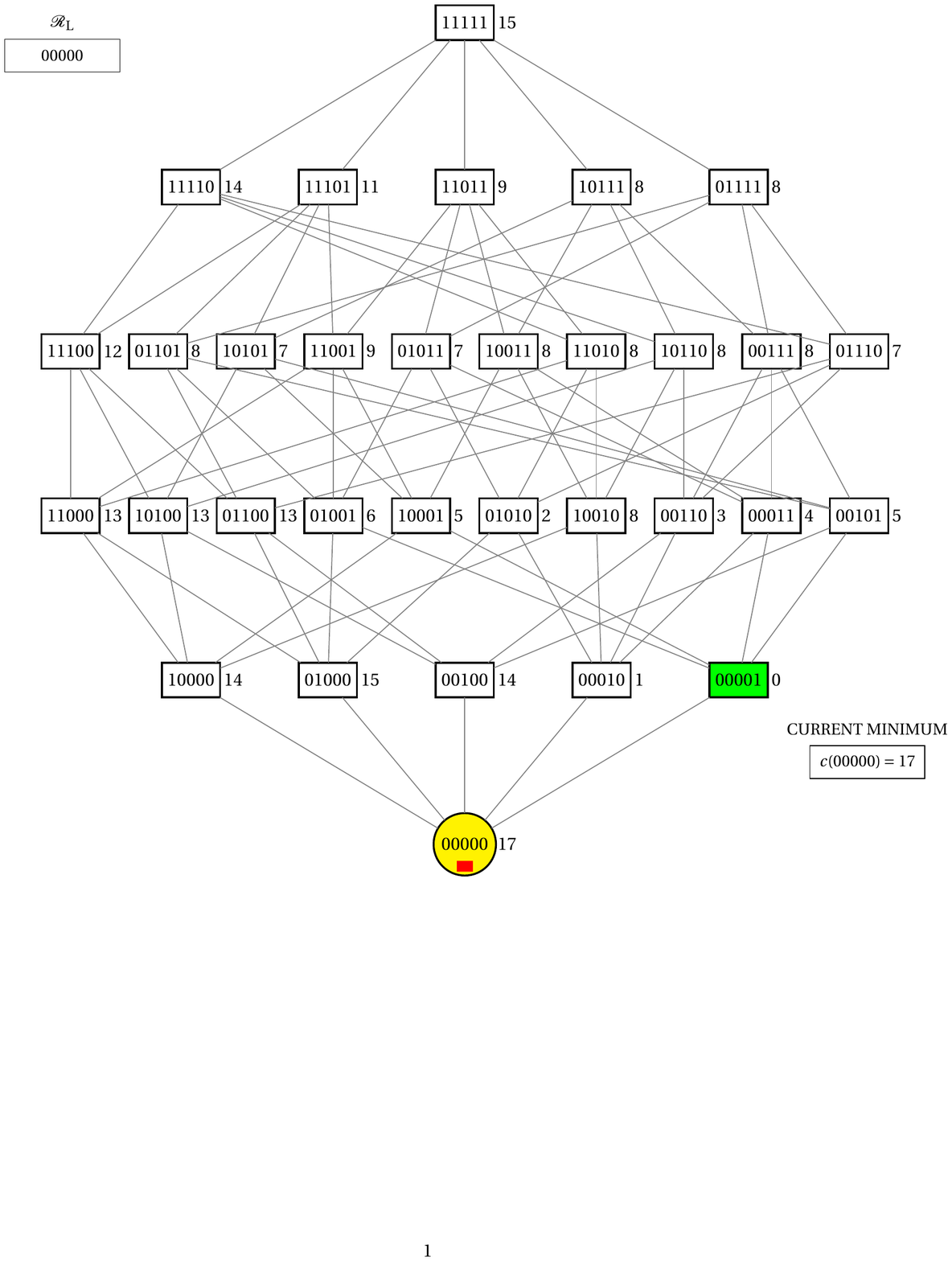}} \label{fig:ucurve_corrected:B} }
    &
    \subfigure[] {\scalebox{0.4}{\includegraphics[trim=3cm 8.5cm 1.5cm 2.5cm, clip=true]{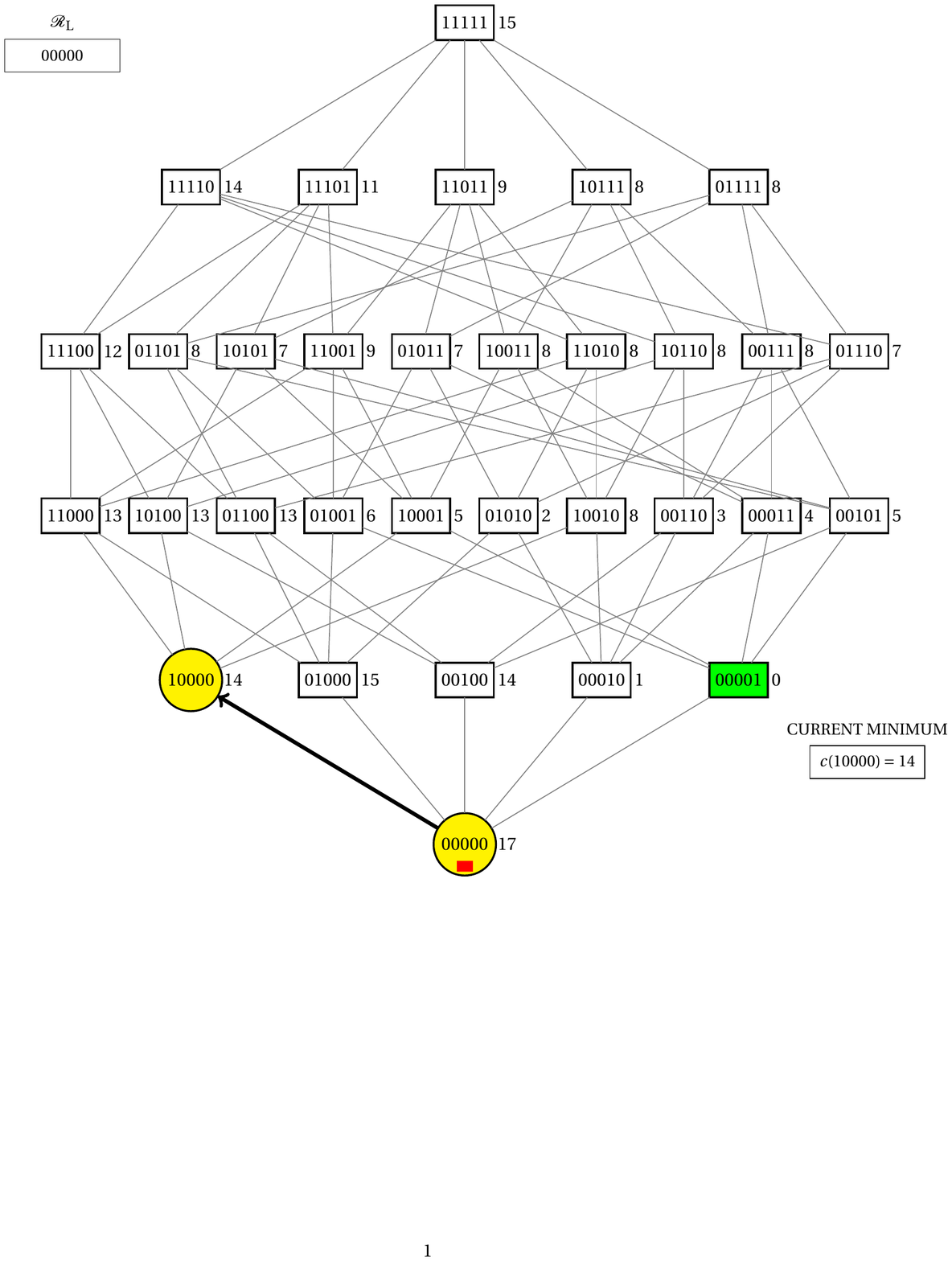}} \label{fig:ucurve_corrected:C} }
    &
    \subfigure[] {\scalebox{0.4}{\includegraphics[trim=3cm 8.5cm 1.5cm 2.5cm, clip=true]{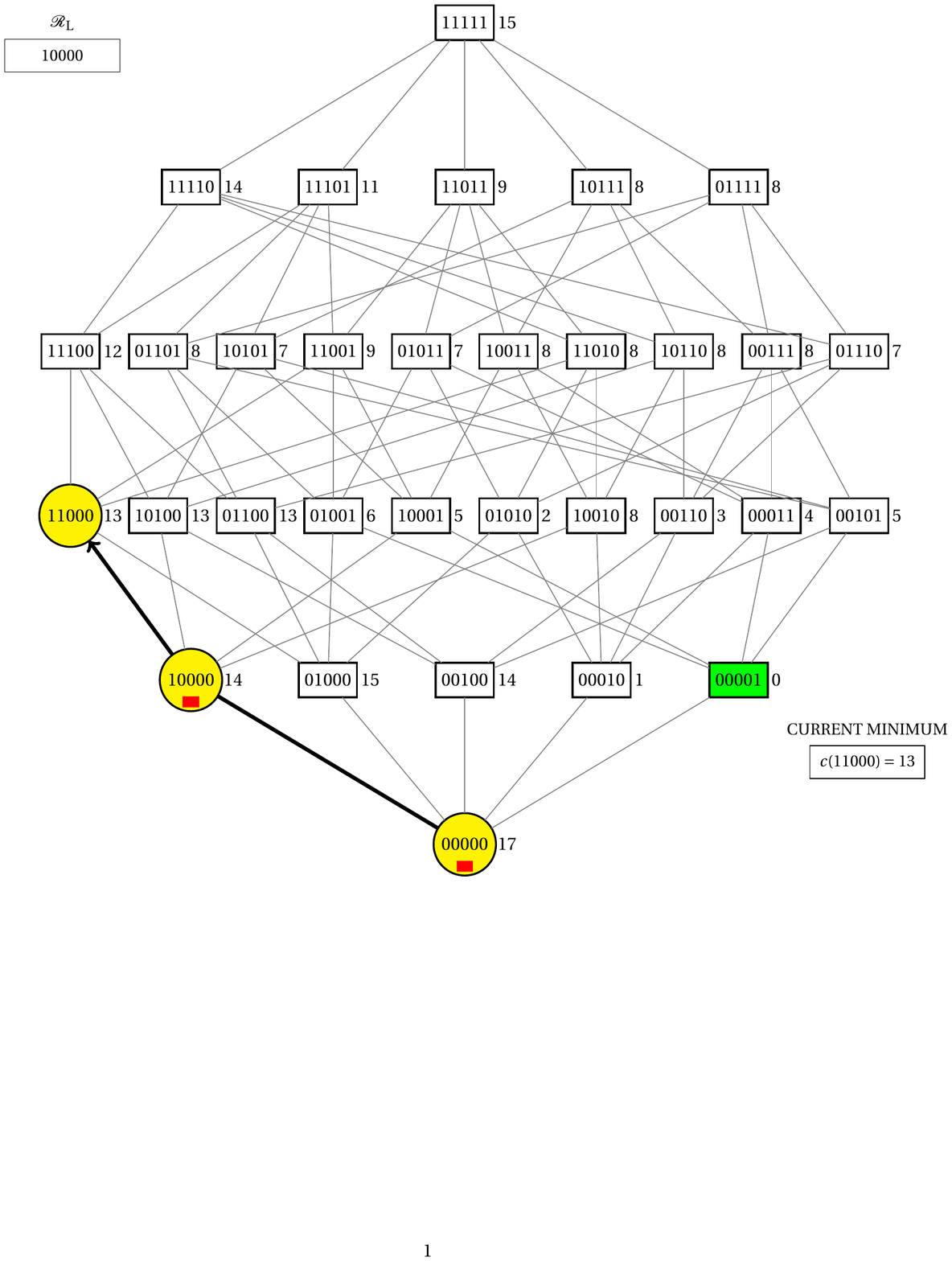}} \label{fig:ucurve_corrected:D} }
    \\
    \subfigure[] {\scalebox{0.4}{\includegraphics[trim=3cm 8.5cm 1.5cm 2.5cm, clip=true]{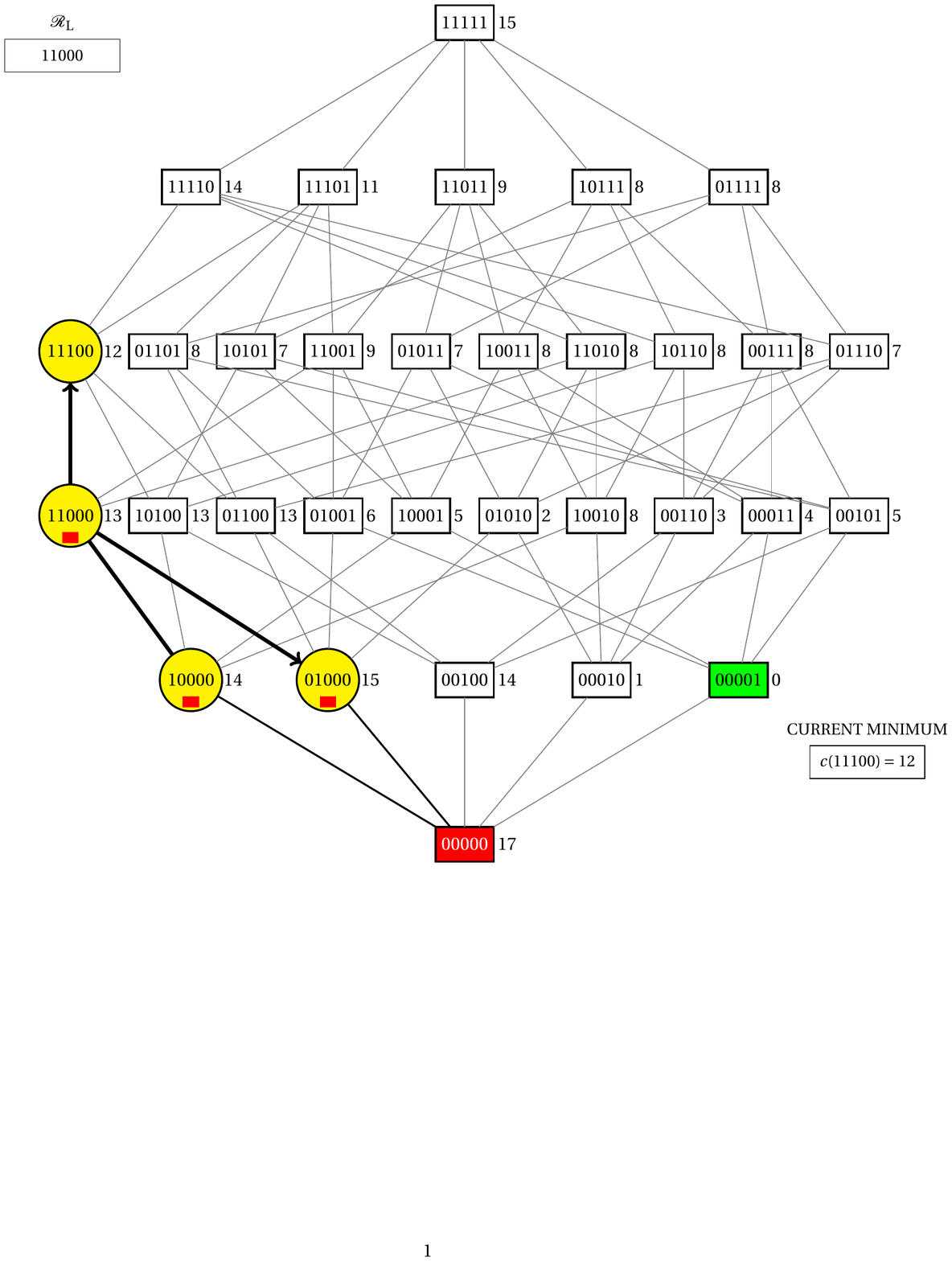}} \label{fig:ucurve_corrected:F} }
    &
    \subfigure[] {\scalebox{0.4}{\includegraphics[trim=3cm 8.5cm 1.5cm 2.5cm, clip=true]{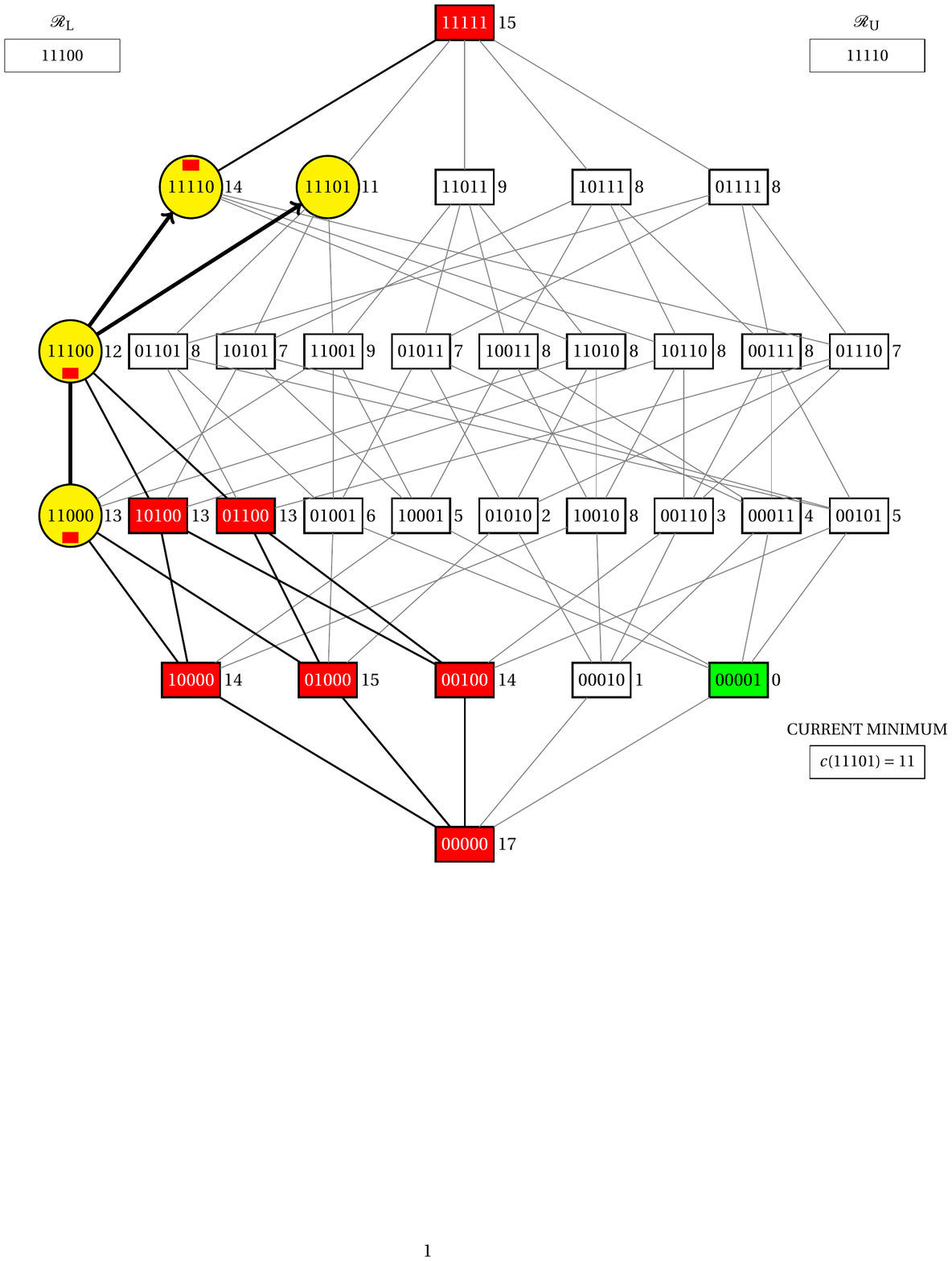}} \label{fig:ucurve_corrected:H} }
    &
    \subfigure[] {\scalebox{0.4}{\includegraphics[trim=3cm 8.5cm 1.5cm 2.5cm, clip=true]{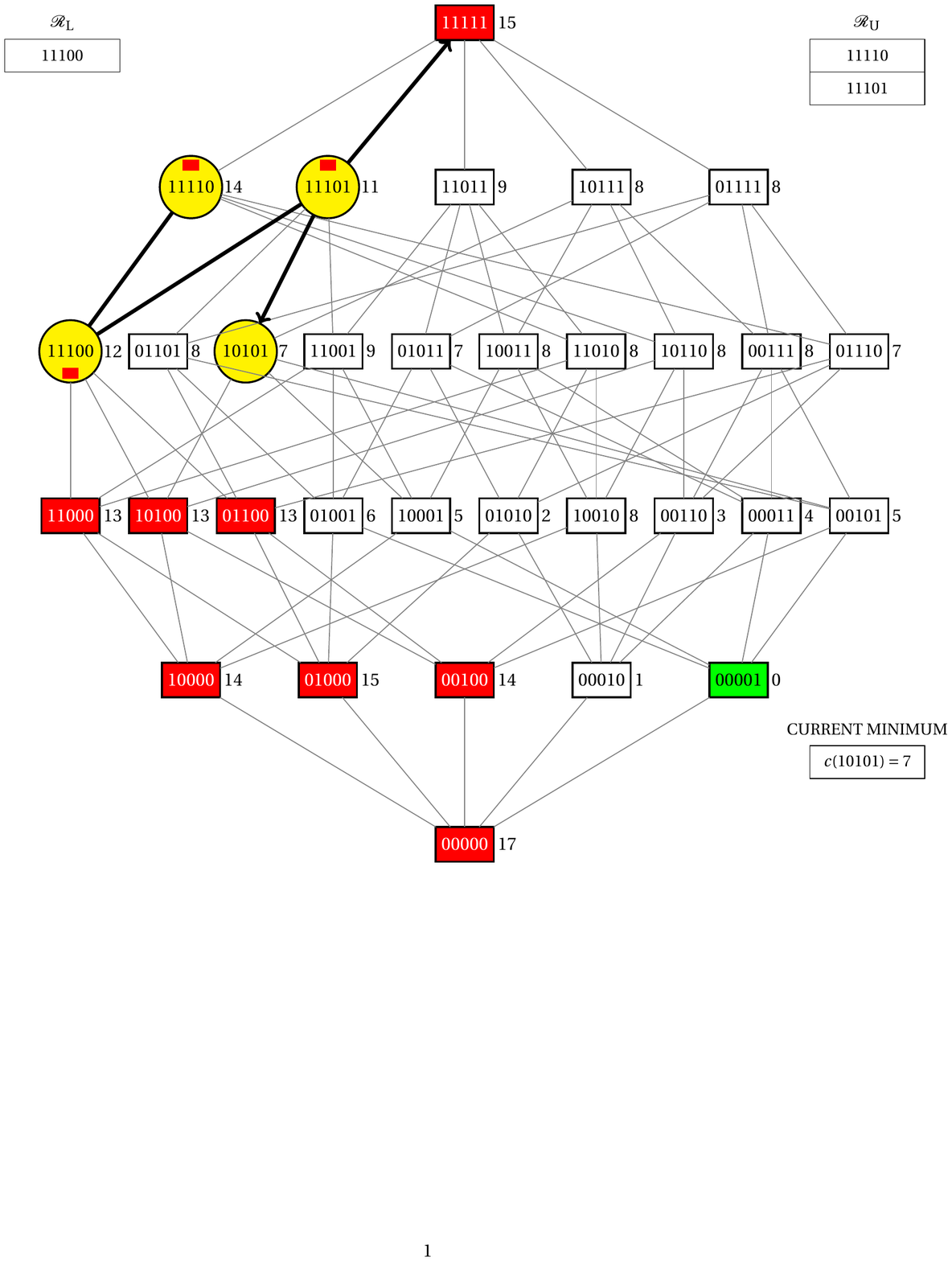}} \label{fig:ucurve_corrected:J} }
  \end{tabular}
  \caption{a simulation of the $\proc{UCS}$
    algorithm. \ref{fig:ucurve_corrected:B}: the selection of a
    minimal element in the search space;  \ref{fig:ucurve_corrected:C}--
    \ref{fig:ucurve_corrected:J}: first five iterations of the DFS
    subroutine; at each iteration, a visited element is either
    included in the collection of nodes (yellow circles) or is used
    for pruning. A lower (upper) red bar inside a yellow circle
    assigns that the node's element is in the collection of
    lower (upper) restrictions $\mathcal{R}_L$ ($\mathcal{R}_U$).}
  \label{fig:ucurve_corrected} 
\end{figure}
\end{landscape}

\subsubsection{Minimal and maximal elements}

Let $X$ be an element of $\mathcal{A} \subseteq \mathcal{P}(S)$. $X$ is {\bf minimal} (in the dual form, {\bf maximal}) in $\mathcal{A}$ if for any $Y \in \mathcal{A}$, $Y \subseteq X$ ($X \subseteq Y$) implies that $X = Y$.

Now let $\mathcal{X}$ be a function that takes values from $\mathcal{P}(S) \times \mathcal{P}(S)$ to $\mathcal{P}(S)$, defined as
\begin{center}
$\mathcal{X}(\mathcal{R}_L, \mathcal{R}_U) =\mathcal{P}(S) - \bigcup \{ [\emptyset, R] : R \in \mathcal{R}_L\} - \bigcup \{ [R, S] : R \in \mathcal{R}_U\}$.
\end{center}
We say that $\mathcal{X}(\mathcal{R}_L, \mathcal{R}_U)$ is a {\bf current search space} of the search space $\mathcal{P}(S)$.

\paragraph{Subroutine description} 
$\proc{Minimal-Element}$ receives a set $S$ and a collection of lower restrictions $\mathcal{R}_L$. If the current search space $\mathcal{X}(\mathcal{R}_L, \emptyset)$ is not empty, then the subroutine returns an element $X$ that is minimal in $\mathcal{X}(\mathcal{R}_L, \emptyset)$; otherwise, it returns $NIL$.

\citet{Ris:2010} provided an implementation of this subroutine (Algorithm
$4$).

\subsubsection{Depth-first search}
Now, we will describe the depth-first search procedure that is called
at each iteration of the $\proc{UCS}$ algorithm. We will start
describing a data structure to store elements of
$\mathcal{P}(S)$ and the flags described in Section
\ref{sec:UCS!principles}. A {\bf node} is a structure with four
variables, of which all of them map to an element  of $\mathcal{P}(S)$:
\begin{itemize}
\item ``element'' represents an element of $\mathcal{P}(S)$;

\item ``unverified'' is used by the search algorithm to keep track of which element adjacent to ``element'' in the Boolean lattice was verified or not;

\item ``lower\_adjacent'' and ``upper\_adjacent'' represent the
  topology of the current search space. If they are empty, then they are equivalent to, respectively, the ``lower restriction flag'' and ``upper restriction flag'' presented in Section \ref{sec:UCS!principles};
\end{itemize}

Let $\mathcal{G}$ be a collection of nodes, $\mathbf{Y}$ be a node in
$\mathcal{G}$, and $\mathcal{X}(\mathcal{R}_L, \mathcal{R}_U)$ be a
current search space. Now, we will define three rules of the kernel of this subroutine dynamics:

\begin{enumerate}[label=(\roman{*}), ref=(\roman{*})]
\item{\it Criterion to stop a search.} \label{item:DFS!unvisited} If there is an element $X \in \mathcal{X}(\mathcal{R}_L, \mathcal{R}_U)$ adjacent to $\mathbf{Y}[element]$ such that $\mathbf{X}[element] \ne X$ for all node $\mathbf{X}$ in $\mathcal{G}$, then $X$ {\bf is an unvisited adjacent element to} $\mathbf{Y}[element]$;

\item{\it Management of the restriction
    flags.} \label{item:DFS!flag_1} Given a $X$ in $\mathcal{P}(S)$ such that $X$ is lower (upper) adjacent to $\mathbf{Y}[element]$, if the unique element in $\mathbf{Y}[element] - X$ ($X - \mathbf{Y}[element]$) is not in $\mathbf{Y}[lower\_adjacent]$ ($\mathbf{Y}[upper\_adjacent]$), then $\mathcal{R}_L$ ($\mathcal{R}_U$) covers $X$.

This rule describes an implementation of the ``upper restriction flag'' and ``lower restriction flag'' presented in Section \ref{sec:UCS!principles}, in a way that they also store information for the verification of the conditions of Propositions \ref{Boolean_algebra_minimum_exhausted} and \ref{Boolean_algebra_minimum_exhausted_dual};

\item{\it Management of visited adjacent elements.} \label{item:DFS!control} If $X$ is an unvisited adjacent element to $\mathbf{Y}$, then $\mathbf{Y}[unverified]$ has an element $y$ such that either $X = \mathbf{Y}[element] \cup \{ y \}$ or $X = \mathbf{Y}[element] - \{ y \}$ holds.

This rule defines a way to avoid that, for each time it is verified if $\mathbf{Y}$ has an unvisited adjacent element, one must inspect all adjacent elements to $\mathbf{Y}[element]$ in $\mathcal{P}(S)$;
\end{enumerate}

We will present now three subroutines that are used by the main DFS
algorithm: the first one provides selection of an unvisited adjacent
element; the second one provides lower pruning; the third one provides
a pruning that takes into account the
conditions of Propositions \ref{sufficient_minimum_exhausted} and
\ref{sufficient_minimum_exhausted_dual}.

\paragraph{Subroutine description} 
$\proc{Select-Unvisited-Adjacent}$ receives a node $\mathbf{Y}$, a
collection of nodes $\mathcal{G}$, a set $S$, a collection of lower
restrictions $\mathcal{R}_L$, and a collection of upper restrictions
$\mathcal{R}_U$. This subroutine searches for an unvisited adjacent
element to $\mathbf{Y}[element]$. It verifies the condition of rule \ref{item:DFS!unvisited}: if an unvisited adjacent $X$ is found, then this subroutine returns a node containing it; otherwise, it returns $NIL$. During the search procedure, the flags of $\mathbf{Y}$ are updated following the rules \ref{item:DFS!flag_1} and \ref{item:DFS!control}; hence, this subroutine also returns $\mathbf{Y}$.

\newpage

\begin{center}
\begin{codebox} 
\Procname{$\proc{Select-Unvisited-Adjacent}(\mathbf{Y}, \mathcal{G}, S, \mathcal{R}_L, \mathcal{R}_U)$} 
\li      \While $\mathbf{Y}[unverified] \ne \emptyset$  \label{li:unvisited:while_begin}
\li          \Do   remove an element $y$ from $\mathbf{Y}[unverified]$    \label{li:select:unverified}
\li                   \If  $y$ is in $\mathbf{Y}[element]$
\li                                \Then  $X \gets \mathbf{Y}[element] - \{ y \}$
\li                                \Else   $X \gets \mathbf{Y}[element] \cup \{ y \}$ \label{li:select:unverified_end}
                                    \End
\li                  \If $X$ is an unvisited adjacent to $\mathbf{Y}[element]$  $\phantom{abcdef}$ \Comment $\mathcal{G}$ is used in this test \label{li:unvisited:node_verification}
\li                                       \Then $\mathbf{X}[element] \gets \mathbf{X}[lower\_adjacent] \gets X$  \label{li:unvisited:node}
\li                                                 $\mathbf{X}[upper\_adjacent] \gets S - X$
\li                                                 $\mathbf{X}[unverified] \gets S$ \label{li:unvisited:unvisited_2}
\li                                                 \Return $\langle \mathbf{X}, \mathbf{Y} \rangle$  \label{li:unvisited:return}
\li                      \Else \If $X$ is lower adjacent to $\mathbf{Y}[element]$ {\bf and} $X$ is covered by $\mathcal{R}_L$ \label{li:unvisited:RL_verification}
\li                            \Then $\mathbf{Y}[lower\_adjacent] \gets \mathbf{Y}[lower\_adjacent] -  (\mathbf{Y}[element] - X)$
                               \End
\li                      \If $X$ is upper adjacent to $\mathbf{Y}[element]$ {\bf and} $X$ is covered by $\mathcal{R}_U$  \label{li:unvisited:RU_verification}
\li                           \Then $\mathbf{Y}[upper\_adjacent] \gets \mathbf{Y}[upper\_adjacent] -  (X - \mathbf{Y}[element])$
                               \End  
                          \End 
              \End                             \label{li:unvisited:while_end}   
\li     \Return $\langle NIL, \mathbf{Y} \rangle$ 
\end{codebox}\end{center}
\onehalfspacing

\paragraph{Subroutine description} 
$\proc{Lower-Pruning}$ receives a node  $\mathbf{Y}$, a collection of nodes $\mathcal{G}$, and a collection of lower restrictions $\mathcal{R}_L$. This subroutine includes $\mathbf{Y}[element]$ in $\mathcal{R}_L$ using the $\proc{Update-Lower-Restriction}$ subroutine. It also removes from $\mathcal{G}$ every node $\mathbf{X}$ such that $\mathbf{X}[element]$ is properly contained in $\mathbf{Y}[element]$. Finally, this subroutine returns the updated collections $\mathcal{G}$ and $\mathcal{R}_L$.

\paragraph{Subroutine description} 
$\proc{Node-Pruning}$ receives a pair of nodes $\mathbf{X}$ and
$\mathbf{Y}$, a collection of nodes $\mathcal{G}$, a collection of
lower restriction $\mathcal{R}_L$, a collection of upper restrictions
$\mathcal{R}_U$, and a decomposable in U-shaped
curves cost function $c$. It verifies the conditions of Propositions
\ref{sufficient_minimum_exhausted} and
\ref{sufficient_minimum_exhausted_dual}, pruning the collection of nodes and updating the collections of restrictions accordingly. Finally, it returns the updated collections $\mathcal{G}$, $\mathcal{R}_L$, and $\mathcal{R}_U$.

\begin{codebox}
\Procname{$\proc{Node-Pruning}(\mathbf{X}, \mathbf{Y}, \mathcal{G}, \mathcal{R}_L, \mathcal{R}_U, c)$} 
\li            $X \gets \mathbf{X}[element] \phantom{abcde} Y \gets \mathbf{Y}[element]$
\li           \If $X$ is upper adjacent to $Y$ {\bf and } $c(X) < c(Y)$   \label{li:graph_pruning_1}
\li                \Then  $\langle \mathcal{G}, \mathcal{R}_L \rangle \gets \proc{Lower-Pruning}(\mathbf{Y}, \mathcal{G}, \mathcal{R}_L)$ \label{li:graph_pruning_1_1}        
\li                           $\mathbf{X}[lower\_adjacent] \gets \mathbf{X}[lower\_adjacent] -  (X - Y)$   \label{li:graph_pruning_lower_single_1}
\li                           $\mathbf{Y}[lower\_adjacent] \gets \emptyset$   \label{li:graph_pruning_lower_empty_1}
\li           \ElseIf $X$ is lower adjacent to $Y$  {\bf and} $c(X) < c(Y)$   \label{li:graph_pruning_2}
\li                 \Then $\langle \mathcal{G}, \mathcal{R}_U \rangle \gets \proc{Upper-Pruning}(\mathbf{Y}, \mathcal{G}, \mathcal{R}_U)$  \label{li:graph_pruning_2_1}
\li                           $\mathbf{X}[upper\_adjacent] \gets \mathbf{X}[upper\_adjacent] -  (Y - X)$ \label{li:graph_pruning_upper_single_1}
\li                           $\mathbf{Y}[upper\_adjacent] \gets \emptyset$    \label{li:graph_pruning_upper_empty_1}
\li           \ElseIf $Y$ is upper adjacent to $X$ {\bf and} $c(X) > c(Y)$  \label{li:graph_pruning_3}
\li                 \Then $\langle \mathcal{G}, \mathcal{R}_L \rangle \gets \proc{Lower-Pruning}(\mathbf{X}, \mathcal{G}, \mathcal{R}_L)$  \label{li:graph_pruning_3_1}
\li                           $\mathbf{Y}[lower\_adjacent] \gets \mathbf{Y}[lower\_adjacent] -  (Y - X)$ \label{li:graph_pruning_lower_single_2}
\li                           $\mathbf{X}[lower\_adjacent] \gets \emptyset$  \label{li:graph_pruning_lower_empty_2}
\li          \ElseIf $Y$ is lower adjacent to $X$ {\bf and} $c(X) > c(Y)$  \label{li:graph_pruning_4}
\li                 \Then $\langle \mathcal{G}, \mathcal{R}_U \rangle \gets \proc{Upper-Pruning}(\mathbf{X}, \mathcal{G}, \mathcal{R}_U)$  \label{li:graph_pruning_4_1}
\li                           $\mathbf{Y}[upper\_adjacent] \gets \mathbf{Y}[upper\_adjacent] -  (X - Y)$  \label{li:graph_pruning_upper_single_2}
\li                           $\mathbf{X}[upper\_adjacent] \gets \emptyset$  \label{li:graph_pruning_upper_empty_2}
                     \End
\li      \Return $\langle \mathcal{G}, \mathcal{R}_L, \mathcal{R}_U \rangle$
\end{codebox}
\onehalfspacing

\paragraph{The DFS main subroutine} Let $X$ be an element of
$\mathcal{P}(S)$ and $c$ be a cost function decomposable in U-shaped
curves. We want to perform a depth-first search in
$\mathcal{X}(\mathcal{R}_L, \mathcal{R}_U)$ that starts with an
element $Y$ and has as the depth-first criterion to expand the first element
$X$  adjacent to $Y$ such that $X \in \mathcal{X}(\mathcal{R}_L, \mathcal{R}_U)$ and $c(X) \le c(Y)$.

\paragraph{Subroutine description} 
$\proc{DFS}$ receives a non-empty set $S$, a node $\mathbf{M}$, a
collection of lower restrictions $\mathcal{R}_L$, a collection of
upper restrictions $\mathcal{R}_U$, and a decomposable in U-shaped curves cost function $c$. This subroutine performs a
depth-first search using the criteria described above. The algorithm inspects a subset of the current search space $\mathcal{X}(\mathcal{R}_L, \mathcal{R}_U)$, update the current minima, and update the current search space to $\mathcal{X}(\mathcal{R'}_L, \mathcal{R'}_U) \subseteq \mathcal{X}(\mathcal{R}_L, \mathcal{R}_U)$. Therefore, the algorithm returns $\mathcal{R'}_L$, $\mathcal{R'}_U$, and a collection $\mathcal{M}$ containing every element $X$ in $\mathcal{X}(\mathcal{R}_L, \mathcal{R}_U) - \mathcal{X}(\mathcal{R'}_L, \mathcal{R'}_U)$ such that $X$ is a minimum in $\mathcal{X}(\mathcal{R}_L, \mathcal{R}_U) - \mathcal{X}(\mathcal{R'}_L, \mathcal{R'}_U)$. 

\begin{codebox} 
\Procname{$\proc{DFS}(\mathbf{M}, S, \mathcal{R}_L, \mathcal{R}_U, c)$} 
\li $\mathcal{M} \gets \emptyset \phantom{abcde} \mathcal{G} \gets \mathcal{S} \gets \{ \mathbf{M} \}$
$\phantom{abcdefghij}$  \Comment $\mathcal{S}$ is a linked list
\li \While $\mathcal{S} \ne \emptyset$ \label{li:DFS:while_begin} 
\li \Do  select the head element $\mathbf{Y}$ of $\mathcal{S}$ \label{li:DFS:select_node}
\li \Repeat \label{li:DFS:repeat_begin} 
\li      $\langle \mathbf{X}, \mathbf{Y} \rangle \gets \proc{Select-Unvisited-Adjacent}(\mathbf{Y}, \mathcal{G}, S, \mathcal{R}_L, \mathcal{R}_U)$ \label{li:DFS:select_element}
\li      \If $\mathbf{X} = NIL$
\li          \Then remove $\mathbf{Y}$ from $\mathcal{S}$  \label{li:DFS:pop}
\li          \Else insert $\mathbf{X}$ into $\mathcal{S}$  as the head \label{li:DFS:push}
\li                   insert $\mathbf{X}$ into $\mathcal{G}$
\li                   insert $\mathbf{X}[element]$ into $\mathcal{M}$ \label{li:DFS:store_minimum_2}
\li                 $\langle \mathcal{G}, \mathcal{R}_L, \mathcal{R}_U \rangle \gets \proc{Node-Pruning}(\mathbf{X}, \mathbf{Y}, \mathcal{G}, \mathcal{R}_L, \mathcal{R}_U, c)$ \label{li:DFS:pruning_1}
          \End   
\li \Until $\mathbf{X} = NIL$ {\bf or} $c(\mathbf{X}[element]) \le c(\mathbf{Y}[element]$) \label{li:DFS:repeat_end} 
\li \If $\mathbf{Y}[lower\_adjacent] = \emptyset$ {\bf and} $\mathbf{Y}[element]$ is not covered by $\mathcal{R}_L$    \label{li:DFS:pruning_5}
\li      \Then $\langle \mathcal{G}, \mathcal{R}_L \rangle \gets \proc{Lower-Pruning}(\mathbf{Y}, \mathcal{G}, \mathcal{R}_L)$  \label{li:DFS:pruning_5_1}
         \End
\li \If $\mathbf{Y}[upper\_adjacent] = \emptyset$ {\bf and} $\mathbf{Y}[element]$ is not covered by $\mathcal{R}_U$    \label{li:DFS:pruning_6}
\li     \Then $\langle \mathcal{G}, \mathcal{R}_U \rangle \gets \proc{Upper-Pruning}(\mathbf{Y}, \mathcal{G}, \mathcal{R}_U)$  \label{li:DFS:pruning_6_1}
         \End
\li \If $\mathbf{Y}[lower\_adjacent] = \emptyset$ {\bf and} $\mathbf{Y}[upper\_adjacent] = \emptyset$
\li     \Then  remove $\mathbf{Y}$ from $\mathcal{G}$
         \End
\li    $\mathcal{S} \gets \mathcal{S} \cap \mathcal{G}$   $\phantom{abcdefghij}$ \Comment removes nodes that were pruned  \label{li:DFS:clean_stack}
     \End  \label{li:DFS:while_end} 
\li  \For each $\mathbf{X}$ in $\mathcal{G}$
\li         \Do \If $\mathbf{X}[lower\_adjacent] = \emptyset$
\li                     \Then  $\mathcal{R}_L \gets \proc{Update-Lower-Restriction}(\mathbf{X}[element], \mathcal{R}_L)$
                    \End
\li               \If $\mathbf{X}[upper\_adjacent] = \emptyset$
\li                     \Then  $\mathcal{R}_U \gets \proc{Update-Upper-Restriction}(\mathbf{X}[element], \mathcal{R}_U)$
                    \End
            \End
\li \Return $\langle \mathcal{M}, \mathcal{R}_L, \mathcal{R}_U \rangle$
\end{codebox}
\onehalfspacing

\subsubsection{The main algorithm}
\label{sec:UCS:main}

Finally,  we present the main algorithm. 

\paragraph{Subroutine description} 
$\proc{UCS}$ receives a non-empty set $S$, a decomposable in U-shaped curves cost function  $c$, and returns a collection of all elements in $\mathcal{P}(S)$ of minimum cost.

\begin{codebox} 
\Procname{$\proc{UCS}(S, c)$} 
\li $\mathcal{M} \gets \mathcal{R}_L \gets \mathcal{R}_U \gets \emptyset$
\li \Repeat  \label{li:UCS:begin}   
\li     \If $\proc{Select-Direction} = \mbox{\tt UP }$ 
\li         \Then $A \gets \proc{Minimal-Element}(S, \mathcal{R}_L)$ \label{li:UCS:minimal}
\li                    \If $A \ne NIL$ 
\li                          \Then $\mathcal{R}_L \gets \proc{Update-Lower-Restriction}(A, \mathcal{R}_L)$ \label{li:UCS:update_lower} 
\li                                     \If $A$ is not covered by $\mathcal{R}_U$
\li                                            \Then $\mathbf{A}[element] \gets A$
\li                                                      $\mathbf{A}[upper\_adjacent] \gets \mathbf{A}[unverified]  \gets S - A$
\li                                                      $\mathbf{A}[lower\_adjacent] \gets \emptyset$
\li                                                      $\langle \mathcal{N}, \mathcal{R}_L, \mathcal{R}_U \rangle \gets \proc{DFS}(\mathbf{A}, S, \mathcal{R}_L, \mathcal{R}_U, c)$
\li                                                      $\mathcal{M} \gets \mathcal{M} \cup \{ A \} \cup \mathcal{N}$
                                                  \End
                              \End
\li         \Else  $A \gets \proc{Maximal-Element}(S, \mathcal{R}_U)$ \label{li:UCS:maximal}
\li                    \If $A \ne NIL$ 
\li                          \Then $\mathcal{R}_U \gets \proc{Update-Upper-Restriction}(A, \mathcal{R}_U)$ \label{li:UCS:update_upper} 
\li                                     \If $A$ is not covered by $\mathcal{R}_L$
\li                                            \Then $\mathbf{A}[element] \gets A$ 
\li                                                      $\mathbf{A}[lower\_adjacent] \gets \mathbf{A}[unverified] \gets A$
\li                                                      $\mathbf{A}[upper\_adjacent] \gets \emptyset$
\li                                                      $\langle \mathcal{N}, \mathcal{R}_L, \mathcal{R}_U \rangle \gets \proc{DFS}(\mathbf{A}, S, \mathcal{R}_L, \mathcal{R}_U, c)$
\li                                                      $\mathcal{M} \gets \mathcal{M} \cup \{ A \} \cup \mathcal{N}$
                              \End
              \End
        \End
\li \Until $A = NIL$    \label{li:UCS:end}
\li \Return $\{ M \in \mathcal{M} :  c(M) \mbox{ is minimum} \}$ \label{li:UCS:return}
\end{codebox}
\onehalfspacing

The function $\proc{Select-Direction}$ returns either {\tt UP} or {\tt
  DOWN}, according to an arbitrary probability distribution. 

\paragraph{Time complexity analysis}
Let $u$ be an integer proportional to the number of iterations of the
loop in the lines \ref{li:UCS:begin}--\ref{li:UCS:end} until the
condition $A = NIL$ is satisfied. The $\proc{UCS}$ algorithm demands $O(f(n) u +  n^2 u^2)$
computational time ~\citep{Reis:Tese}. Once an execution of this algorithm may explore a fraction of the search space that is proportional to the size of a Boolean lattice of degree $n$, the actual upper bound for the asymptotic computational time of $\proc{UCS}$ is $O(f(n) 2^n + n^2 2^n) = O((f(n) + n^2) 2^n)$.

%
\section{Experimental Results}
\label{sec:results}

In this section, we present some experimental evaluation of the $\proc{UCS}$ algorithm, including a comparison with other optimal and suboptimal algorithms. To compare $\proc{UCS}$ with an optimal search algorithm, the branch-and-bound algorithm presented by \citet{Narendra:1977} was not a suitable benchmark, since it works with the hypothesis that the cost function is monotonic. Moreover, it solves a different problem, since it explores only subsets of $S$ that have an arbitrary cardinality~\citep{Narendra:1977}. Therefore, to provide a better comparison basis other than the exhaustive search, we employed an optimal branch-and-bound algorithm to tackle the U-curve problem, the $\proc{U-Curve-Branch-and-Bound}$ ($\proc{UBB}$) algorithm, which generalizes the kinds of instances solved by the algorithm proposed by Narendra and Fukunaga. The $\proc{UBB}$ algorithm is presented in~\citet{Reis:Tese}. To compare $\proc{UCS}$ with a suboptimal search algorithm, we used the $\proc{SFFS}$ heuristic.

To carry out the experimental evaluation of the $\proc{UCS}$ algorithm, we also developed {\tt featsel}, an object-oriented framework coded in C++ that allows implementation of solvers and cost functions over a Boolean lattice search space ~\citep{Reis:Tese}. For the experiments showed in this article, we implemented the $\proc{UCS}$ algorithm, the $\proc{SFFS}$ heuristic, and the $\proc{UBB}$ algorithm. The experiments were done using a $32$-bit PC with clock of $2.8$ GHz and memory of $8$ GB.

Each experiment employed either simulated instances or real data from
W-operator design. We will now describe the cost functions that were
used in the experiments.

\paragraph{Simulated instances} In this type of experiment, the
performance of the algorithms were evaluated using ``hard''
instances. For each instance size, it was generated random synthetic U-curve instances
through the methodology presented by~\citet{Reis:Tese}. For each instance size, one hundred instances were produced, with the number of features ranging from seven to eighteen. For each single instance, we carried out on it the three algorithms. Finally, for each size of instance, we executed the $\proc{UCS}$, $\proc{UBB}$ and $\proc{SFFS}$ algorithms, taking the average required time of the execution, the average number of computed nodes (i.e., the number of times the cost function was computed), and counting, for each algorithm, the number of times it found a best solution.

\paragraph{Design of W-operators} A W-operator is an image transformation that is locally defined inside a window $W$ (i.e., a subset of the integer plane) and translation invariant ~\citep{Barrera:2000}. An image transformation is characterized by a classifier. One step in the design of the classifier is the feature selection (i.e., the choice of the subsets of W that optimizes some quality criterion). The minimization of mean conditional entropy is a quality criterion that was showed effective in W-operator window design ~\citep{Martins-Jr:2006}.

Let $X$ be a subset of a window $W$ and $\bf X $ be a random variable in $\mathcal{P}(X)$. The conditional entropy of a binary random variable $Y$ given ${\bf X} = {\bf x }$ is given by 
\begin{align*}
H(Y | {\bf X} = {\bf x}) = - \sum_{y \in Y}{P(Y = y | {\bf X} = {\bf x}) \log{ P(Y = y| {\bf X} = {\bf x}) } },
\end{align*}
in which $P(.)$ is the probability distribution function. By definition, $log 0 = 0$. The mean conditional entropy of $Y$ given $\bf X$ is expressed by
\begin{align*}
E[H(Y | {\bf X})] = \sum_{{\bf x} \in {\bf X}}{ H(Y | {\bf X} = {\bf x}) P({\bf X} = {\bf x})}.
\end{align*}

In practice, the values of the conditional probability distributions are estimated, thus a high error estimation may be caused by the lack of sufficient sample data ({\it i.e.}, rarely observed pairs $\langle y, {\bf x} \rangle$ may be underrepresented). In order to try to circumvent the insufficient number of samples, it is adopted the penalty for pairs of values that have a unique observation: it is considered a uniform distribution, thus leading to the highest entropy. Therefore, adopting the penalization,  the estimation of the mean conditional entropy is given by
\begin{equation} \label{eq:MCE}
\hat{E}[H(Y | {\bf X})] = \frac{N}{t} + \sum_{{\bf x} \in {\bf X} : \hat{P}({\bf x }) > \frac{1}{t}}{ \hat{H}(Y | {\bf X} = {\bf x}) \hat{P}({\bf X} = {\bf x}) },
\end{equation}
in which $N$ is the number of values of $\bf X$ with a single occurrence in the samples and $t$ is the total number of samples. 

The penalized mean conditional entropy (Equation \ref{eq:MCE}) was used in this experiment as the cost function. The training samples were obtained from fourteen pairs of binary images presented in~\citet{Martins-Jr:2006}, running a window of size $16$ over each of observed image and obtaining the corresponding value of the transformed image. The result was fourteen sets with $1 152$ samples each. For each set, we executed the $\proc{UCS}$, $\proc{UBB}$ and $\proc{SFFS}$ algorithms, taking the required time of the execution, the number of computed nodes (i.e., the number of times the cost function was computed), and verified, for each algorithm, if it found a best solution.

\paragraph{Types of experiments} For each type of cost function, we
performed two types of experiments: first, we evaluated the
$\proc{UCS}$ algorithm as an optimal search algorithm; second, we used
the $\proc{UCS}$ algorithm as an heuristic, in which the stop
criterion is to reach a maximum number of times that the cost function
may be computed; such threshold is obtained through pre-processing
steps that will be explained later. The results of simulated data
experiments were also employed to perform an analysis of some aspects
of the $\proc{UCS}$ algorithm dynamics; such analysis will be presented in Discussion section.

\subsection{Optimal-search experiments}

In Table \ref{tab:comparison_subset_sum}, we summarize the results of the optimal search experiment with simulated instances. In the semantic point of view, $\proc{SFFS}$ had a poor performance in this experiment: first, on the smallest instances ($2^7$) it found the best solution only in $44$ out of $100$; second, as the size of the instance increases, it was less likely that $\proc{SFFS}$ provided an optimal solution. $\proc{UBB}$ and $\proc{UCS}$ are equivalent, since both give an optimal solution. In the computational performance point of view, for instances from size $2^7$ to $2^{12}$, $\proc{SFFS}$ and $\proc{UCS}$ computed the cost function a similar number of times. $\proc{UCS}$ always gave an optimal solution with a better ratio between number of size of instance / number of computed nodes than $\proc{UBB}$; for example, for instances of size $18$, the ratios of $\proc{UCS}$ and $\proc{UBB}$ were, respectively, $11.7$ and $1.87$. On the other hand, $\proc{UBB}$ always gave an optimal solution with a better ratio between size of instance / computational time  than $\proc{UCS}$; for example, for instances of size $18$, the ratios of $\proc{UCS}$ and $\proc{UBB}$ were, respectively, $741$ and $17 476$.

\begin{table}[!t] \begin{center} 
\caption{comparison between $\proc{UCS}$, $\proc{SFFS}$ and
  $\proc{UBB}$, using simulated data. Although the
  computational time $\proc{UCS}$ increases faster than the branch and
  bound one, the former demands less computation of the cost function
  (``computed nodes'') than the
  latter.} \label{tab:comparison_subset_sum}
\small
\begin{tabular}{@{}cccccccccccccc@{}} \toprule
\multicolumn{2}{c}{Instance size} && \multicolumn{3}{c}{Time (sec)} && \multicolumn{3}{c}{\# Computed nodes} && \multicolumn{3}{c}{\# The best solution}\\
\cline{1-2} \cline{4-6} \cline{8-10} \cline{12-14}
$|S|$ & $2^{|S|}$  && $\proc{UCS}$ & $\proc{UBB}$ & $\proc{SFFS}$ && $\proc{UCS}$ & $\proc{UBB}$ & $\proc{SFFS}$  && $\proc{UCS}$ & $\proc{UBB}$ & $\proc{SFFS}$\\ \hline
 7 &     128 && 0.03 & 0.01 & 0.02 && 63 & 89 & 114 && 100 & 100 & 44\\ 
 8 &     256 && 0.10 & 0.05 & 0.05 && 104 & 173 & 172 && 100 & 100 & 29\\ 
 9 &     512 && 0.11 & 0.06 & 0.03 && 157 & 328 & 281 && 100 & 100 & 30\\ 
10 &    1 024 && 0.19 & 0.09 & 0.04 && 242 & 679 & 465 && 100 & 100 & 25\\ 
11 &    2 048 && 0.36 & 0.17 & 0.04 && 494 & 1 422 & 469 && 100 & 100 & 15\\ 
12 &    4 096 && 0.69 & 0.27 & 0.06 && 788 & 2 451 & 539 && 100 & 100 & 17\\ 
13 &    8 192 && 1.49 & 0.60 & 0.08 && 1 335 & 5 327 & 822 && 100 & 100 & 10\\ 
14 &   16 384 && 2.90 & 0.95 & 0.08 && 1 922 & 9 374 & 892 && 100 & 100 & 17\\ 
15 &   32 768 && 8.75 & 2.08 & 0.11 && 4 127 & 19 686 & 1 069 && 100 & 100 & 6\\ 
16 &   65 536 && 27.38 & 4.79 & 0.15 && 7 607 & 45 353 & 1 479 && 100 & 100 & 4\\ 
17 &  131 072 && 74.80 & 8.83 & 0.14 && 12 526 & 80 462 & 1 688 && 100 & 100 & 10\\ 
18 &  262 144 && 354.22 & 15.70 & 0.16 && 22 461 & 139 977 & 1 555 && 100 & 100 & 5\\ 
\bottomrule \end{tabular}
\end{center} \end{table}
\onehalfspacing

In Table \ref{tab:comparison_woperator}, we show the results of the optimal search experiment on the design of W-operators, in which each set was used once for the feature selection procedure. Only in one of the W-operator feature selection $\proc{SFFS}$ provided an optimal solution. $\proc{UCS}$ and $\proc{UBB}$ always gave an optimal solution. Moreover, $\proc{UCS}$ gave an optimal solution in less computational time more frequently than $\proc{UBB}$, since the computation of the cost function is much more expensive than in the cost function employed in the simulated data experiments.

\begin{table}[!t] \begin{center} 
\small
\caption{comparison between $\proc{UCS}$, $\proc{SFFS}$ and
  $\proc{UBB}$, using as the cost function a penalized mean
  conditional entropy, during a W-operator design with a window of
  size $16$ (i.e., with a search space of size
  $2^{16}$).} \label{tab:comparison_woperator} 
\bigskip
\begin{tabular}{@{}ccccccccccccc@{}} \toprule
Test  && \multicolumn{3}{c}{Time (sec)} && \multicolumn{3}{c}{\# Computed nodes} && \multicolumn{3}{c}{\# The best solution}\\
\cline{1-1} \cline{3-5} \cline{7-9} \cline{11-13}
&& $\proc{UCS}$ & $\proc{UBB}$ & $\proc{SFFS}$ && $\proc{UCS}$ & $\proc{UBB}$ & $\proc{SFFS}$  && $\proc{UCS}$ & $\proc{UBB}$ & $\proc{SFFS}$\\ \hline
 1 & & 296.2 & 404.4 & 3.2 && 19 023 & 63 779 & 1 325 && 1 & 1 & 0\\ 
 2 & & 146.9 & 425.8 & 4.5 && 9 114 & 64 406 & 1 309 && 1 & 1 & 0\\ 
 3 & & 217.3 & 395.1 & 10.0 && 12 401 & 62 718 & 3 077 && 1 & 1 & 0\\ 
 4 & & 421.5 & 312.6 & 1.8 && 25 622 & 60 265 & 940 && 1 & 1 & 0\\ 
 5 & & 378.0 & 346.5 & 4.3 && 24 690 & 62 484 & 1 680 && 1 & 1 & 0\\ 
 6 & & 139.5 & 442.7 & 17.6 && 7 682 & 65 197 & 3 248 && 1 & 1 & 0\\ 
 7 & & 232.7 & 522.0 & 5.2 && 13 464 & 64 755 & 1 566 && 1 & 1 & 1\\ 
 8 & & 382.8 & 252.5 & 0.9 && 24 553 & 58 652 & 615 && 1 & 1 & 0\\ 
 9 & & 345.6 & 457.0 & 5.2 && 18 461 & 63 461 & 1 736 && 1 & 1 & 0\\ 
10 & & 382.0 & 485.3 & 8.0 && 19 152 & 63 966 & 2 357 && 1 & 1 & 0\\ 
11 & & 398.7 & 354.3 & 3.9 && 22 752 & 61 183 & 1 851 && 1 & 1 & 0\\ 
12 & & 235.8 & 414.4 & 4.5 && 14 017 & 64 395 & 1 720 && 1 & 1 & 0\\ 
13 & & 220.6 & 478.8 & 6.8 && 12 118 & 64 700 & 2 077 && 1 & 1 & 0\\ 
14 & & 179.7 & 483.6 & 12.6 && 9 291 & 65 469 & 2 695 && 1 & 1 & 0\\ 
\cline{1-1} \cline{3-5} \cline{7-9} \cline{11-13}
Average && 284.1 & 412.5 & 6.4 && 16 595 & 63 245 & 1 871 && 1 & 1 & 0.07\\ 
\bottomrule \end{tabular}
 \end{center} \end{table}
\onehalfspacing

\subsection{Suboptimal-search experiments}

The suboptimal-search experiments were performed in three steps: the first two steps were a pre-processing, used to produce a criterion to stop the search; the third step was the actual suboptimal search. In the following, we describe how these steps were executed for each size of instance in the simulated data and for each instance in the real data.

\begin{itemize}
\item first, the $\proc{SFFS}$ heuristic was executed over the instances. For the simulated data, we produced an average of the minimum cost obtained from each of the hundred instances. For the real data, we kept the minimum cost obtained from a given instance;

\item second, $\proc{UCS}$ and $\proc{UBB}$ were executed over the instances, this time using as a stop criterion the threshold obtained in the first step. For the simulated data, we produced, for each algorithm, an average of the number of computed nodes from each of the hundred instances, keeping the greatest value. For the real data, for each algorithm, we took the number of computed nodes, keeping the greatest value;

\item third, $\proc{SFFS}$, $\proc{UCS}$, and $\proc{UBB}$ were executed over the instances again, this time using as a stop criterion the threshold obtained in the second step.
\end{itemize} 

Finally, the results were organized in the same way presented in the tables of the optimal-search experiments. The only difference is that in the column ``\# The best solution'' (i.e., the number of times an algorithm achieved a minimum), the counting might not necessarily involve global minima.

In Table~\ref{tab:comparison_subset_sum_heuristic_search_step}, we
summarize the results of the suboptimal-search experiment over
simulated instances, while in Table we summarize the results of its
third step. As the size of instance is constant in each line of the
table, $\proc{UCS}$ has a better semantic than $\proc{SFFS}$ and
$\proc{UBB}$, and $\proc{UBB}$ is better than $\proc{SFFS}$.

\begin{table}[!t] \begin{center} 
\small
 \caption{pre-processing that generates thresholds for a suboptimal search in simulated instances.} \label{tab:comparison_subset_sum_heuristic_preprocessing_step}
\bigskip
\begin{tabular}{@{}ccccccccc@{}} \toprule
\multicolumn{2}{c}{Size of instance} & \phantom{abc} & \multicolumn{3}{c}{\# Computed nodes} & \phantom{abc} & Threshold\\
\cline{1-2} \cline{4-6} \cline{8-8}
$|S|$ & $2^{|S|}$  && $\proc{UCS}$ & $\proc{UBB}$ & $\proc{SFFS}$ && \\ \hline
 7 &     128 && 25.88 & 41.84 & 133.13 && 134\\ 
 8 &     256 && 30.79 & 72.54 & 178.17 && 179\\ 
 9 &     512 && 47.62 & 134.38 & 276.11 && 277\\ 
10 &    1 024 && 65.97 & 274.75 & 441.95 && 442\\ 
11 &    2 048 && 73.48 & 399.60 & 520.39 && 521\\ 
12 &    4 096 && 105.58 & 814.07 & 588.28 && 815\\ 
13 &    8 192 && 138.89 & 1 760.80 & 1 023.65 && 1761\\ 
14 &   16 384 && 161.51 & 2 191.74 & 805.04 && 2192\\ 
15 &   32 768 && 167.45 & 6 334.63 & 1 234.98 && 6335\\ 
16 &   65 536 && 196.45 & 8 803.55 & 1 396.27 && 8804\\ 
17 &  131 072 && 275.00 & 18 286.13 & 1 805.41 && 18287\\ 
18 &  262 144 && 299.02 & 25 890.91 & 1 795.37 && 25891\\ 
\bottomrule \end{tabular} \end{center} \end{table}
\onehalfspacing

\begin{table}[!t] \begin{center} 
\caption{suboptimal search in simulated instances; the thresholds for
  each test were obtained through a pre-processing whose results are showed in
  Table~\ref{tab:comparison_subset_sum_heuristic_preprocessing_step}.} \label{tab:comparison_subset_sum_heuristic_search_step}
\small
\bigskip
\begin{tabular}{@{}cccccccccccccc@{}} \toprule
\multicolumn{2}{c}{Instance size} & & \multicolumn{3}{c}{Time (sec)} & & \multicolumn{3}{c}{\# Computed nodes} & & \multicolumn{3}{c}{\# The best solution}\\
\cline{1-2} \cline{4-6} \cline{8-10} \cline{12-14}
$|S|$ & $2^{|S|}$  && $\proc{UCS}$ & $\proc{UBB}$ & $\proc{SFFS}$ && $\proc{UCS}$ & $\proc{UBB}$ & $\proc{SFFS}$  && $\proc{UCS}$ & $\proc{UBB}$ & $\proc{SFFS}$\\ \hline
 7 &     128 & & 0.03 & 0.02 & 0.02 && 57 & 87 & 65 && 100 & 100 & 31\\ 
 8 &     256 & & 0.07 & 0.03 & 0.02 && 93 & 134 & 91 && 100 & 75 & 28\\ 
 9 &     512 & & 0.11 & 0.05 & 0.02 && 163 & 230 & 139 && 98 & 62 & 14\\ 
10 &    1 024 & & 0.19 & 0.08 & 0.03 && 271 & 386 & 228 && 99 & 43 & 10\\ 
11 &    2 048 & & 0.29 & 0.08 & 0.04 && 354 & 446 & 256 && 93 & 47 & 10\\ 
12 &    4 096 & & 0.50 & 0.09 & 0.05 && 512 & 655 & 348 && 93 & 47 & 12\\ 
13 &    8 192 & & 0.90 & 0.20 & 0.09 && 981 & 1 518 & 824 && 92 & 42 & 11\\ 
14 &   16 384 & & 1.21 & 0.24 & 0.07 && 1 432 & 1 826 & 731 && 97 & 37 & 9\\ 
15 &   32 768 & & 3.65 & 0.60 & 0.16 && 3 050 & 5 174 & 1 234 && 97 & 45 & 15\\ 
16 &   65 536 & & 6.38 & 0.91 & 0.12 && 4 957 & 7 615 & 1 396 && 97 & 39 & 5\\ 
17 &  131 072 & & 21.02 & 1.83 & 0.21 && 9 834 & 15 825 & 1 805 && 96 & 28 & 4\\ 
18 &  262 144 & & 30.52 & 2.63 & 0.20 && 14 381 & 22 224 & 1 795 && 95 & 31 & 4\\ 
\bottomrule \end{tabular} 
\end{center} \end{table}
\onehalfspacing

In Table \ref{tab:comparison_woperator_heuristic_search_step}, we
summarize the results of the suboptimal-search experiment over W-operator design data. $\proc{UCS}$ found the best solution in all instances, $\proc{UBB}$ found the best solution twice (Tests $7$ and $10$), and $\proc{SFFS}$ found a best solution only once (Tests $7$).  $\proc{UCS}$ found an optimal solution in $65\%$ of the instances, while $\proc{UBB}$ found an optimal solution in $14\%$ of the instances and $\proc{SFFS}$ found an optimal solution in $7\%$ of the instances. Moreover, $\proc{UCS}$ gave an optimal solution in less computational time more frequently than $\proc{UBB}$, since the computation of the cost function is much more expensive than in the cost function employed in the simulated data experiments.

\begin{table}[!t] \begin{center} 
\small
\caption{pre-processing that generates thresholds for a suboptimal search in W-operator design.} \label{tab:comparison_woperator_heuristic_preprocessing_step}
\bigskip
\begin{tabular}{@{}ccccccc@{}} \toprule
Test number & \phantom{abc} & \multicolumn{3}{c}{\# Computed nodes} & \phantom{abc} & Threshold\\
\cline{1-1} \cline{3-5} \cline{7-7}
&& $\proc{UCS}$ & $\proc{UBB}$ & $\proc{SFFS}$ && \\ \hline
 1 && 58 & 26 616 & 1 325 && 26 616\\ 
 2 && 22 & 7 776 & 1 309 && 7 776\\ 
 3 && 6 853 & 46 289 & 3 077 && 46 289\\ 
 4 && 215 & 3 846 & 940 && 3 846\\ 
 5 && 50 & 15 270 & 1 680 && 15 270\\ 
 6 && 143 & 44 655 & 3 248 && 44 655\\ 
 7 && 8 782 & 64 755 & 1 566 && 64 755\\ 
 8 && 175 & 14 019 & 615 && 14 019\\ 
 9 && 287 & 31 517 & 1 736 && 31 517\\ 
10 && 106 & 60 739 & 2 357 && 60 739\\ 
11 && 91 & 7 480 & 1 851 && 7 480\\ 
12 && 98 & 43 591 & 1 720 && 43 591\\ 
13 && 440 & 46 005 & 2 077 && 46 005\\ 
14 && 481 & 43 706 & 2 695 && 43 706\\  
\cline{1-1} \cline{3-5} \cline{7-7}
Average && 1 271.50 & 32 590.29 & 1 871.14 && 32 590.29\\
\bottomrule \end{tabular} \end{center} \end{table}
\onehalfspacing

\begin{table}[!t] \begin{center} 
\caption{suboptimal search in W-operator design; the thresholds for
  each test were obtained through a pre-processing whose results are showed in
  Table~\ref{tab:comparison_woperator_heuristic_preprocessing_step}.} \label{tab:comparison_woperator_heuristic_search_step} 
\small
\bigskip
\begin{tabular}{@{}ccccccccccccc@{}} \toprule
Test & & \multicolumn{3}{c}{Time (sec)} & & \multicolumn{3}{c}{\# Computed nodes} & & \multicolumn{3}{c}{\# The best solution}\\
\cline{1-1} \cline{3-5} \cline{7-9} \cline{11-13}
&& $\proc{UCS}$ & $\proc{UBB}$ & $\proc{SFFS}$ && $\proc{UCS}$ & $\proc{UBB}$ & $\proc{SFFS}$  && $\proc{UCS}$ & $\proc{UBB}$ & $\proc{SFFS}$\\ \hline
 1 & & 307.3 & 132.4 & 3.2 && 18 866 & 26 616 &   1 325 && 1 & 0 & 0\\ 
 2 & & 104.5 & 27.4 & 4.5 && 7 776 & 7 776 & 1 309 && 1 & 0 & 0\\ 
 3 & & 210.6 & 262.7 & 10.0 && 12 299 & 46 289 & 3 077 && 1 & 0 & 0\\ 
 4 & & 20.1 & 10.7 & 1.8 && 3 846 & 3 846 & 940 && 1 & 0 & 0\\ 
 5 & & 155.7 & 61.0 & 4.5 && 15 270 & 15 270 &  1 680 && 1 & 0 & 0\\ 
 6 & & 162.5 & 254.7 & 16.6 && 8 937 & 44 655 & 3 248 && 1 & 0 & 0\\ 
 7 & & 226.7 & 421.4 & 4.9 && 13 304 & 64 755 & 1 566 && 1 & 1 & 1\\ 
 8 & & 81.9 & 46.2 & 0.9 && 14 019 & 14 019 & 615 && 1 & 0 & 0\\ 
 9 & & 332.8 & 187.1 & 5.3 && 18 738 & 31 517 & 1 736 && 1 & 0 & 0\\ 
10 & & 351.2 & 440.1 & 8.1 && 18 996 & 60 739 & 2 357 && 1 & 1 & 0\\ 
11 & & 65.3 & 26.5 & 4.0 && 7 480 & 7 480 & 1851 && 1 & 0 & 0\\ 
12 & & 265.6 & 246.3 & 4.5 && 15 778 & 43 591 & 1 720 && 1 & 0 & 0\\ 
13 & & 240.2 & 295.7 & 6.9 && 11 508 & 46 005 & 2 077 && 1 & 0 & 0\\ 
14 & & 184.3 & 270.0 & 12.7 && 8 928 & 43 706 & 2 695 && 1 & 0 & 0\\ 
\cline{1-1} \cline{3-5} \cline{7-9} \cline{11-13}
Average && 193.5	& 191.6	& 6.3 && 12 553 & 32 590 & 1 871 && 1 & 0.14 & 0.07\\
\bottomrule \end{tabular}
\end{center} \end{table}
\onehalfspacing

\section{Discussion}
\label{sec:discussion}
In this section, we will discuss the theoretical and experimental
results presented in this article, through analyses of some aspects of
the $\proc{UCS}$ algorithm dynamics and performance. We present these
analyses in the following order: first, the $\proc{UCS}$ dynamics,
which is analyzed on simulated data; second, a comparison of
performance between $\proc{UCS}$ and $\proc{UBB}$, which is performed
on the results obtained from the optimal experiments on real data.

To compare the performance between $\proc{UCS}$ and $\proc{UBB}$ in
the simulated experiments, we executed these algorithms on one hundred
different simulated instances of the same size. For each instance, we
stored the required time to compute all calls of the cost function and
the required time of a whole execution of the algorithm. Finally, we
calculated the averages for the hundred instances of the required time
of the cost function and of the required time of a whole execution. In
Figures \ref{fig:ucs_bb_histogram_simulated:A} and
\ref{fig:ucs_bb_histogram_simulated:B}, we show histograms of the
computing time (in seconds) demanded by, respectively, $\proc{UCS}$
and $\proc{UBB}$ on the $12$ simulated experiments showed in Table
\ref{tab:comparison_subset_sum}. The red bars represent time spent on
computation of the cost function, while the green bars are the time
spent in the remaining actions of each algorithm. On the one hand,
$\proc{UCS}$ expends much more computational time in tasks that do not
involve computing the cost function. On the other hand, $\proc{UBB}$
expends approximately the same amount of time for computing the cost
function and to perform other tasks. $\proc{UCS}$ demands more time
for a whole execution than $\proc{UBB}$, but it spends less time
computing the cost function. To evaluate the performance of
$\proc{UCS}$ and $\proc{UBB}$ in the W-operator design experiments, we
executed the algorithms on the $14$ instances of real data and stored,
for each instance and for each algorithm, the required time to compute
all calls of the cost function and the required time of a whole
execution of the algorithm. In Figures \ref{fig:ucs_bb_histogram:A}
and \ref{fig:ucs_bb_histogram:B}, we show histograms of the computing
time (in seconds) demanded by, respectively, $\proc{UCS}$ and
$\proc{UBB}$ on the estimation of $14$ W-operators showed in Table
\ref{tab:comparison_woperator}. The red bars represent time spent on
computation of the cost function, while the green bars are the time
spent in the remaining actions of each algorithm. $\proc{UCS}$ demands
a smaller computational time for the whole feature selection procedure
of each W-operator, but it expends more computational time in tasks
that do not involve computing the cost function.

\begin{figure}
  \footnotesize
  \centering
  \begin{tabular}{cc}
    \subfigure[] {\scalebox{0.6}{\includegraphics{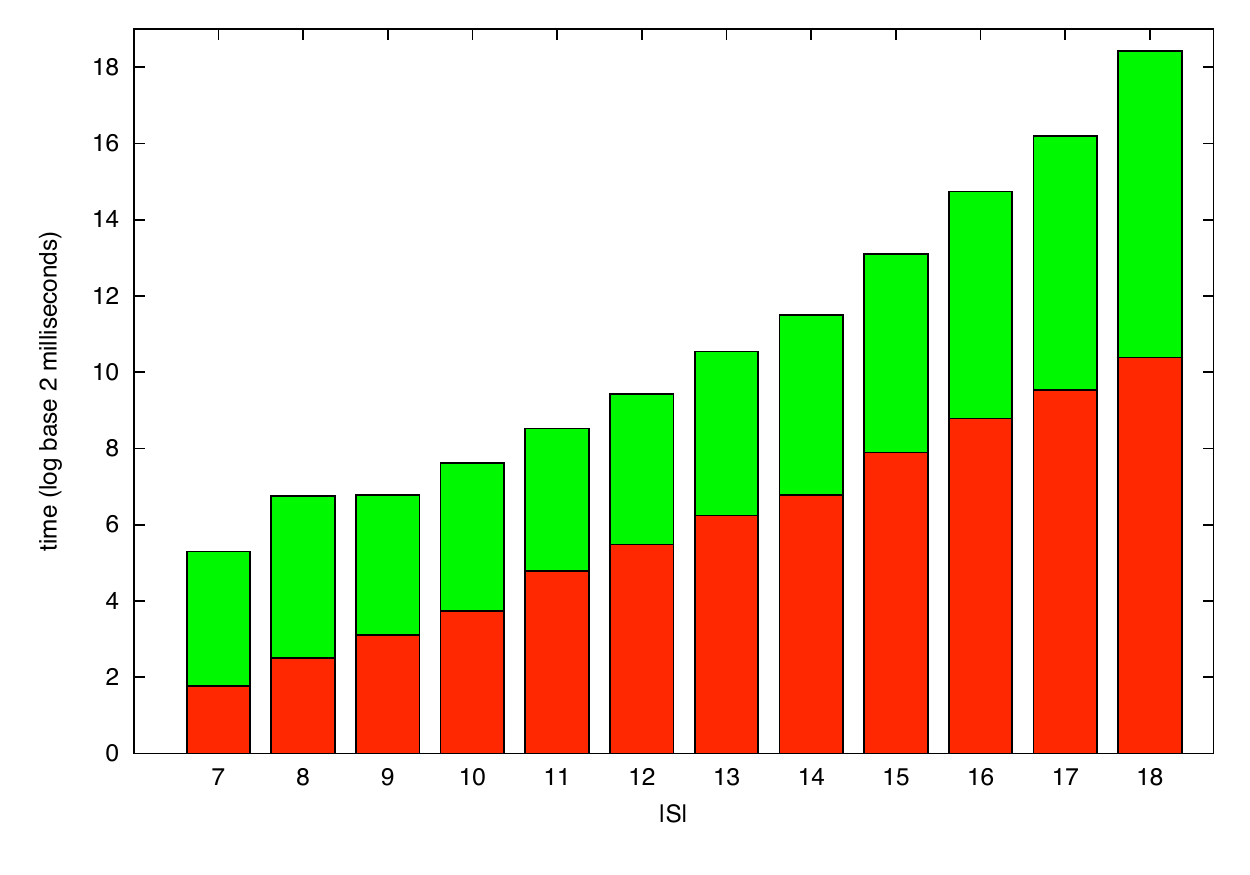}} \label{fig:ucs_bb_histogram_simulated:A}}
    &
    \subfigure[] {\scalebox{0.6}{\includegraphics{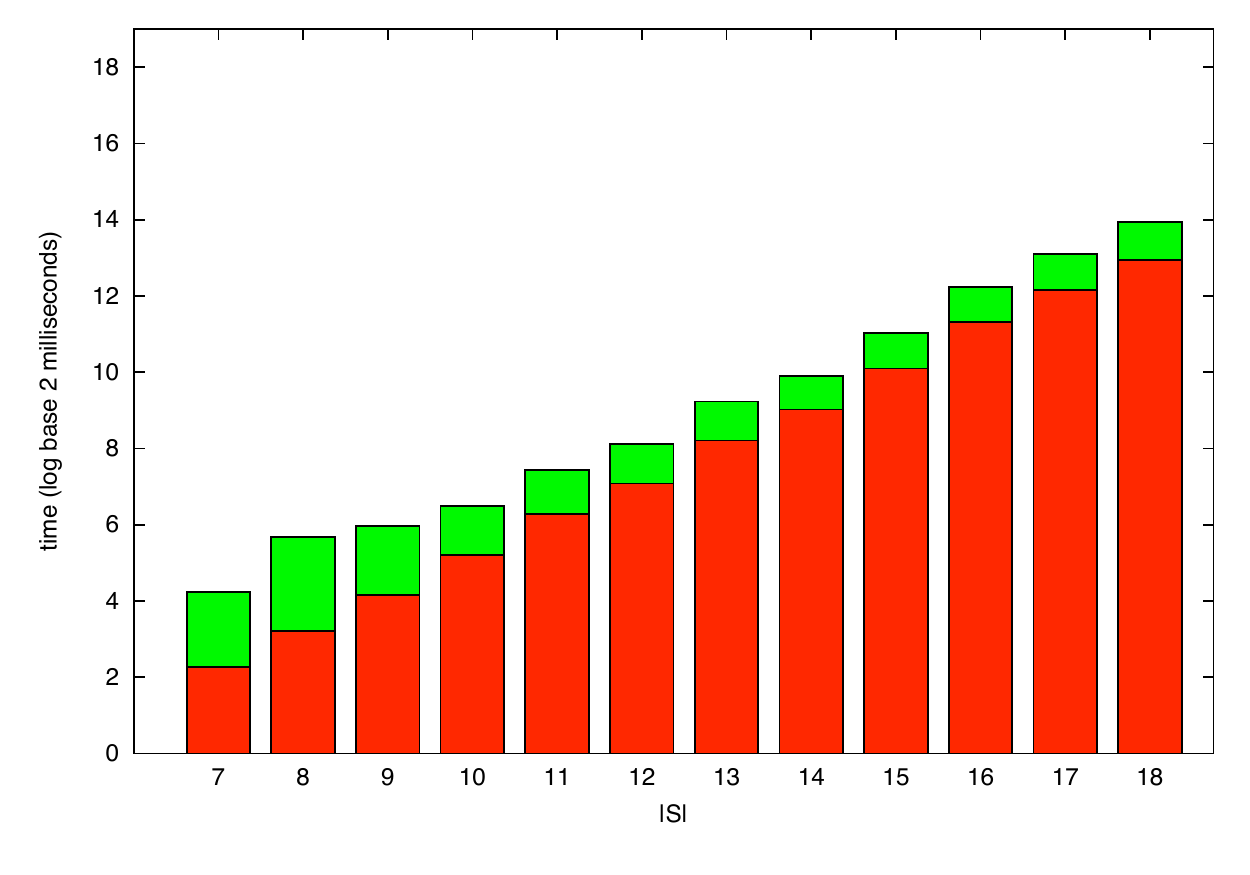}} \label{fig:ucs_bb_histogram_simulated:B}}
    \\
  \end{tabular}
  \caption{comparison of the computing time (in log base 2 seconds) demanded by the $\proc{UCS}$ (\ref{fig:ucs_bb_histogram_simulated:A}) and $\proc{UBB}$ (\ref{fig:ucs_bb_histogram_simulated:B}) on the simulated data showed in Table \ref{tab:comparison_subset_sum}. The red bars represent time spent on computation of the cost function, while the green bars are the time spent in the remaining actions of each algorithm. The $\proc{UCS}$ algorithm always expends more computational time in tasks that does not involve calculating the cost function; on the other hand, the opposite phenomenon occurs with the $\proc{UBB}$.}
  \label{fig:ucs_bb_histogram_simulated} 
\end{figure}

\begin{figure}
  \footnotesize
  \centering
  \begin{tabular}{cc}
    \subfigure[] {\scalebox{0.6}{\includegraphics{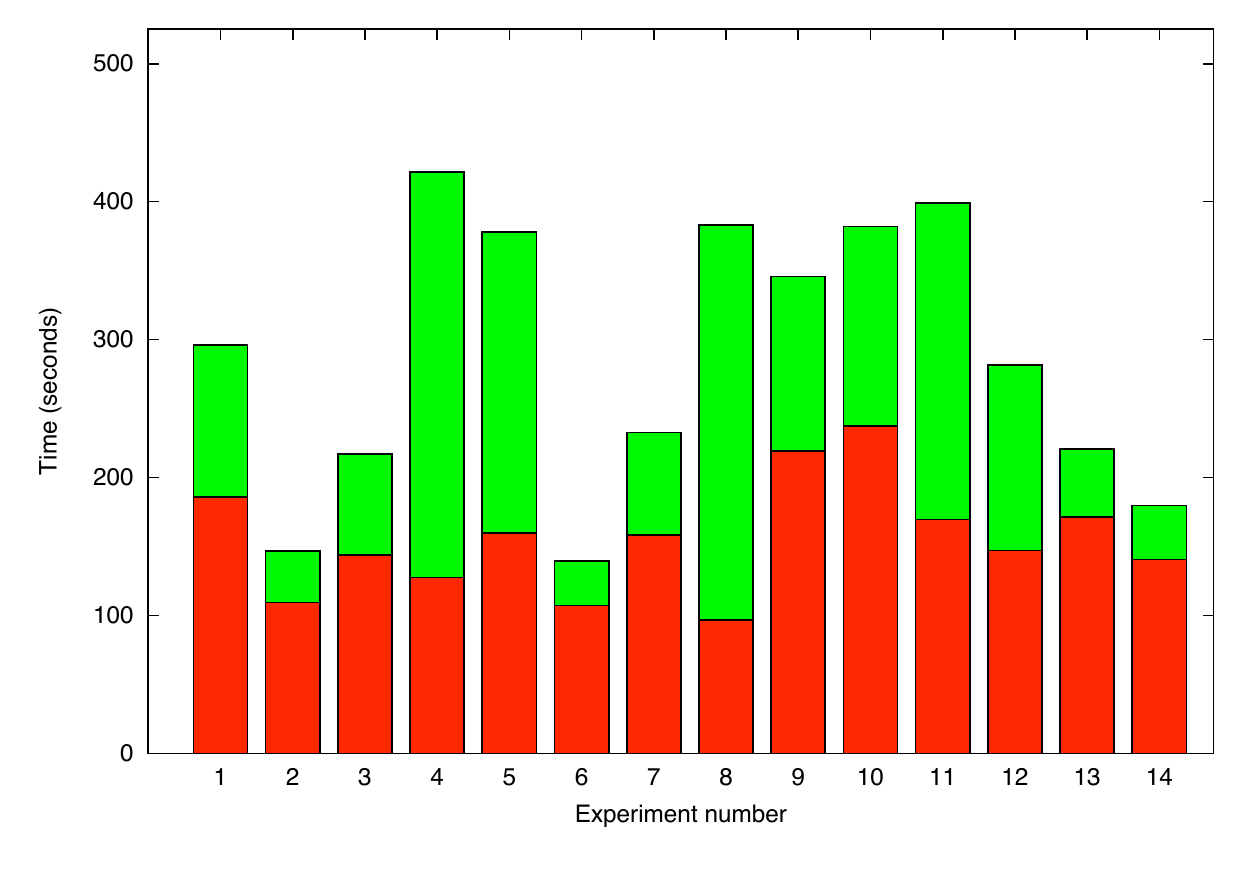}} \label{fig:ucs_bb_histogram:A}}
    &
    \subfigure[] {\scalebox{0.6}{\includegraphics{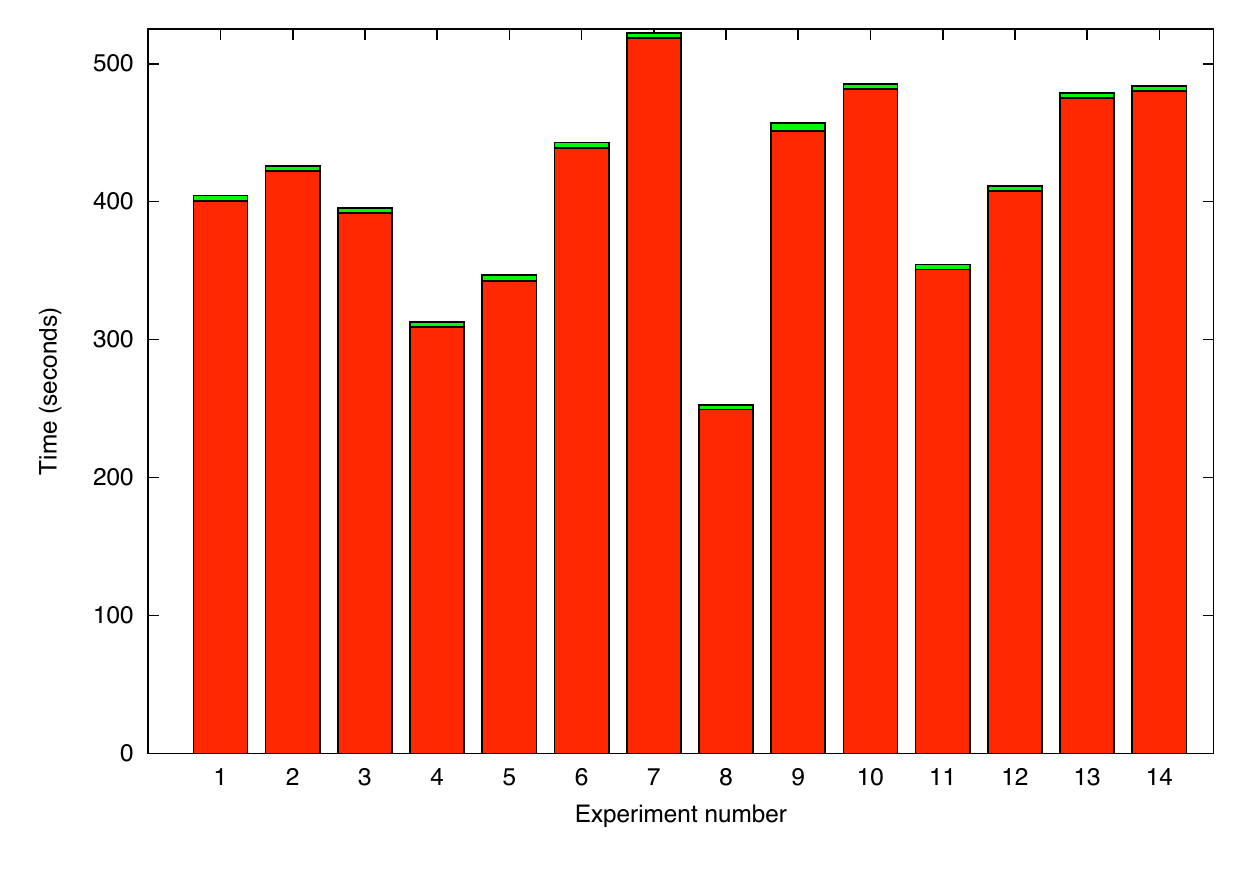}} \label{fig:ucs_bb_histogram:B}}
    \\
  \end{tabular}
  \caption{comparison of the computing time (in seconds) demanded by
    the $\proc{UCS}$ (\ref{fig:ucs_bb_histogram:A}) and $\proc{UBB}$
    (\ref{fig:ucs_bb_histogram:B}) on W-operator design showed in Table \ref{tab:comparison_woperator}. The red bars represent time spent on computation of the cost function, while the green bars are the time spent in the remaining actions of each algorithm. The $\proc{UCS}$ algorithm often demands a smaller computational time for the whole feature selection procedure of each W-operator, but it always expends more computational time that does not involve calculating the cost function.}
  \label{fig:ucs_bb_histogram} 
\end{figure}

To investigate the facts showed in Figures~\ref{fig:ucs_bb_histogram_simulated} and \ref{fig:ucs_bb_histogram}, we did an analysis of the $\proc{UCS}$ algorithm dynamics. We executed the $\proc{UCS}$ on one hundred different instances of the same size. For each instance, we stored the number of times an execution calls the $\proc{DFS}$ subroutine (i.e., a new depth-first search on the current search space) and the number of times an execution calls either $\proc{Minimal-Element}$ or $\proc{Maximal-Element}$ (i.e., returns a new element that might belong to the current search space). Finally, we calculated the averages of calls of $\proc{DFS}$ and calls of $\proc{Minimal-Element}$ or $\proc{Maximal-Element}$ for the hundred instances. In Figures \ref{fig:ucs_DFS_and_minimal:subset_sum:A} and \ref{fig:ucs_DFS_and_minimal:subset_sum:B}, we present graphics of this ratio in function of the experiment label in, respectively, the optimal and suboptimal-search experiments. These graphics show how many times $\proc{UCS}$ calls either $\proc{Minimal-Element}$ or $\proc{Maximal-Element}$ subroutines until it receives a new element of the current search space, which is a necessary condition to call the $\proc{DFS}$ subroutine. As the size of the instance increases, $\proc{UCS}$ needed to perform more calls of the $\proc{Minimal-Element}$ or $\proc{Maximal-Element}$ subroutines until it received a new element of the search space. In both graphics, as the size of instance increases, the distance between the green line (calls of $\proc{Minimal-Element}$ or $\proc{Maximal-Element}$) and the orange line (calls of $\proc{DFS}$) also increases. For instance, in Figure \ref{fig:ucs_DFS_and_minimal:subset_sum:A}, for instances of size $5$, on the average more than $18\%$ of the iterations find a new element of the current search space; on the other hand, for instances of size $18$, on the average less than $2\%$ of the iterations find a new element of current search space. These graphics show that the search for a new element of the search space is a process that contributes to the high percentage of the green bars on the histogram showed in Figure \ref{fig:ucs_bb_histogram:A}.

\begin{figure}
  \footnotesize
  \centering
  \begin{tabular}{cc}
    \subfigure[] {\scalebox{0.28}{\includegraphics{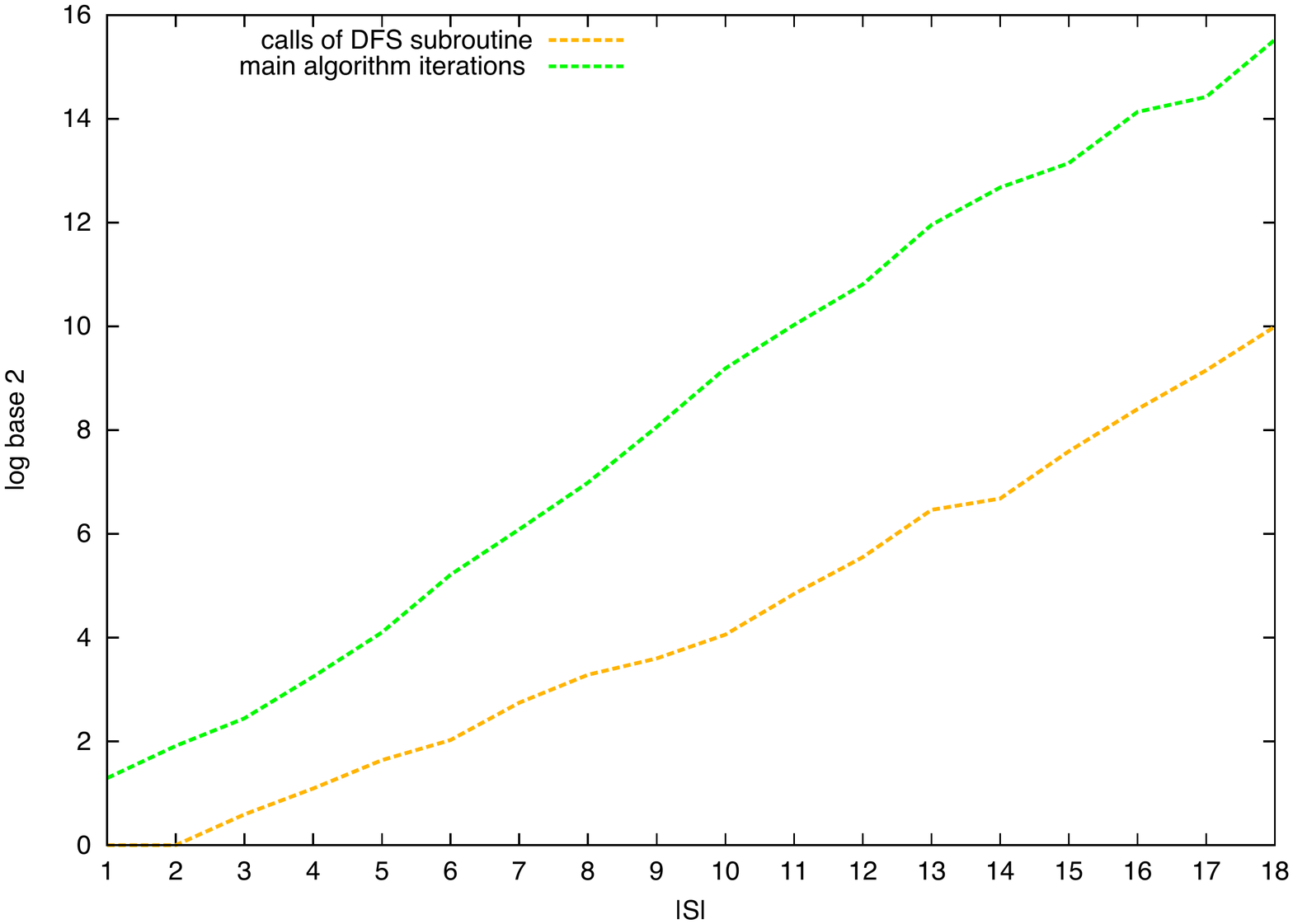}} \label{fig:ucs_DFS_and_minimal:subset_sum:A}}
    &
    \subfigure[] {\scalebox{0.28}{\includegraphics{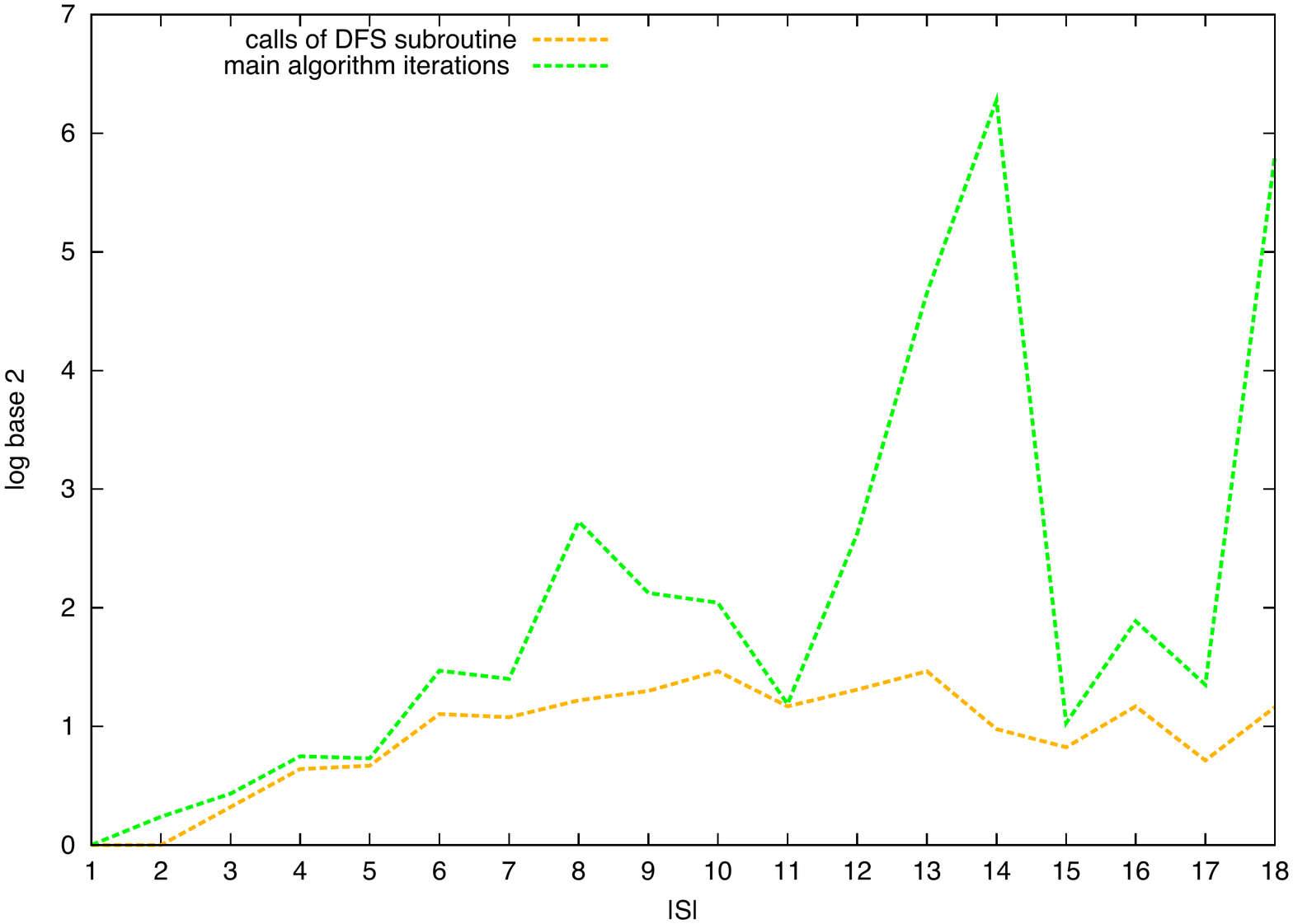}} \label{fig:ucs_DFS_and_minimal:subset_sum:B}}
    \\
  \end{tabular}
  \caption{comparison between the number of times the $\proc{DFS}$ subroutine is called (orange line) and the number of iterations of the main algorithm (green line), during the execution of the $\proc{UCS}$ algorithm on instances of different sizes. Figures \ref{fig:ucs_DFS_and_minimal:subset_sum:A} and \ref{fig:ucs_DFS_and_minimal:subset_sum:B} were produced with the same output used to produce, respectively, Tables \ref{tab:comparison_subset_sum} (i.e., optimal search on simulated data) and \ref{tab:comparison_subset_sum_heuristic_search_step} (i.e., heuristic search on simulated data).}
  \label{fig:ucs_DFS_and_minimal:subset_sum} 
\end{figure}

One property that was empirically observed in Tables \ref{tab:comparison_subset_sum_heuristic_search_step} and \ref{tab:comparison_woperator_heuristic_search_step} is that the velocity of convergence of the $\proc{UCS}$ is much greater than the ones of $\proc{UBB}$ and $\proc{SFFS}$. This property has potential for the design of better optimal-search algorithms, though it has no impact on the complexity analysis of the $\proc{UCS}$ algorithm.

\section{Conclusion}
\label{sec:conclusion}
Aiming to solve the U-curve problem, \citet{Ris:2010} introduced the $\proc{U-Curve}$ algorithm. $\proc{U-Curve}$ is based in the Boolean lattice structure of  the search space. The principles of the algorithm are: (i) if the current search space is not empty, then it pushes in a stack an element $M$; otherwise, it stops; (ii) it selects the top of the stack $M$; if there is an element $N$ adjacent to $M$ such that $c(N) \le c(M)$, then $N$ is pushed in the stack; otherwise, it pops $M$ and uses it to prune the current search space; (iii) if the stack is not empty, then it returns to step (ii); otherwise, it returns to step (i). We demonstrated that this algorithm has a pruning error that leads to suboptimal solutions; the error was pointed out in Figure \ref{fig:ucurve_error}, through a simulation of the algorithm. 

This article introduced a new algorithm, the $\proc{U-Curve-Search}$
($\proc{UCS}$), which is actually optimal to solve the U-curve
problem. $\proc{UCS}$ keeps the general structure of $\proc{U-Curve}$,
but changes the step (ii): it stores in the head of a linked list elements adjacent to $M$ in the current search space, until it either finds an element $N$ such that $c(N) \le c(M)$ (i.e., it reaches the depth-first search criterion) or explores all adjacent elements. Moreover, it uses $M$ to prune the current search space only if it is achieved some sufficient conditions to do so without the risk of losing global minima. $\proc{UCS}$ brings some important improvements: (1) the depth-first search criterion avoids to keep in the memory too many elements, thus providing a better control of the algorithm memory usage; (2) once it achieves a deepest element, it performs less pruning, but very effective ones. The general idea of $\proc{UCS}$ was showed through a simulation presented in Figure \ref{fig:ucurve_corrected}. 

It was performed a diagnosis of the quality of the $\proc{UCS}$
algorithm through experiments on simulated and real data. On the one
hand, $\proc{UCS}$ expends too much computational time looking for a
new element in the search space, which contributes to the
computational time expend in a whole execution of the algorithm. This
fact was showed in Figures \ref{fig:ucs_DFS_and_minimal:subset_sum:A}
and \ref{fig:ucs_DFS_and_minimal:subset_sum:B}. On the other hand, the
$\proc{UCS}$  pruning procedures are effective. This fact was
suggested by Table
\ref{tab:comparison_subset_sum_heuristic_search_step}, which shows
that $\proc{UCS}$ converges relatively fast, and by Figures
\ref{fig:ucs_bb_histogram_simulated:A} and
\ref{fig:ucs_bb_histogram:A}, which  show that $\proc{UCS}$ computes
few nodes while it covers a large subset of the search space. 

A comparison between $\proc{UCS}$, $\proc{UBB}$ and $\proc{SFFS}$ was also made through experiments on simulated and real data. $\proc{UCS}$ had a better performance in the exploration of the search space than $\proc{UBB}$, generally demanding much less computation of the cost function to find an optimal solution. Besides, it often demanded less computational time to find optimal solutions in real data experiments. These facts were showed in Tables \ref{tab:comparison_subset_sum} and \ref{tab:comparison_woperator} and discussed in Section \ref{sec:discussion}.  In the suboptimal-search experiments, $\proc{UCS}$ had a better performance than the other algorithms, finding more frequently a best solution. Moreover, the velocity of convergence of the $\proc{UCS}$ is much greater than the ones of $\proc{UBB}$ and $\proc{SFFS}$. These facts were empirically observed by Tables \ref{tab:comparison_subset_sum_heuristic_search_step} and \ref{tab:comparison_woperator_heuristic_search_step}. $\proc{UBB}$ demanded less computational time than $\proc{UCS}$ in simulated data experiments, in which the computation of the cost function is inexpensive. This fact was showed in Figure \ref{fig:ucs_bb_histogram_simulated}. However, Figures \ref{fig:ucs_DFS_and_minimal:subset_sum:A} and \ref{fig:ucs_DFS_and_minimal:subset_sum:B} suggest that the difference between algorithms is explained by the $\proc{UCS}$ inefficient search for an element of the search space.

Future improvements of the $\proc{UCS}$ algorithm might include:
implementation of a more efficient way to find an element of the
search space; the development of a new DFS, in which it is processed multiple DFS procedures, which might permit a more homogenous exploration of the search space and thus increasing of the pruning efficiency; build of parallelized versions of the algorithm. Finally, another line of future works would be the definition of subspaces of large Boolean lattices, for instance, delimiting the search space by a collection of intervals, or defining parametrized constraints.

\section{Acknowledgments}
This work was supported by CNPq scholarship $142039/2007-1$ and FAPESP project $08/51273-9$ (Marcelo da Silva Reis),  by CNPq grant $306442/2011-6$ and FAPESP Pronex $2011/50761-2$ (Junior Barrera), and by CNPq grant $302736/2010-7$ (Carlos Eduardo Ferreira).

%
%

\bibliographystyle{model5-names}  
\bibliography{bib-UCS}



\end{document}